\def\BibFile{library.bib}
\def\tkzscl{1}
\def\figstyle{_large}
\def\footnoterule{\kern-3pt \hrule \kern 2.6pt }
\def\ShortHeadings#1#2{\def\ps@jmlrps{\let\@mkboth\@gobbletwo%
\def\@oddhead{\hfill {\small\sc #1} \hfill}%
\def\@oddfoot{\parbox[t]{\textwidth}{%
\hrule
\small\vskip.3em Ongoing work - This document is not finished or considered for publication yet.
			  \center\rm \thepage}}%
\def\@evenhead{\hfill {\small\sc #2} \hfill}%
\def\@evenfoot{\parbox[t]{\textwidth}{%
\hrule
\small\vskip.3em Ongoing work - This document is not finished or considered for publication yet.
			  \center\rm \thepage}}}%
\pagestyle{jmlrps}}
\def\crossref{\externaldocument{/home/tom/.cache/gummi/.extension.tex}}
\title{\textbf{Understanding the Learned Iterative Soft Thresholding Algorithm with matrix factorization}}
\author{\name Thomas Moreau \email thomas.moreau@cmla.ens-cachan.fr \\
       \addr CMLA, ENS Cachan, CNRS, \\
		Universit\'e Paris-Saclay,\\
		94235 Cachan, France\\
       \AND
       \name Joan Bruna \email bruna@cims.nyu.edu \\
       \addr Courant Institute of Mathematical Sciences,\thanks{Work done while appointed at UC Berkeley, Statistics Department (currently on leave)} \\
	New York University , \\
	New York, NY 10012, USA	 \\}
\date{}
\begin{document}
\def\biblio{}
\def\crossref{}

\editor{}

\maketitle

\begin{abstract}
Sparse coding is a core building block in many data analysis and machine learning pipelines.
Typically it is solved by relying on generic optimization techniques, such as the Iterative Soft Thresholding Algorithm and its accelerated version (ISTA, FISTA).
These methods are optimal in the class of first-order methods for non-smooth, convex functions.
However, they do not exploit the particular structure of the problem at hand nor the input data distribution.
An acceleration using neural networks, coined LISTA, was proposed in \cite{Gregor10}, which showed empirically that one could achieve high quality estimates with few iterations by modifying the parameters of the proximal splitting appropriately.

In this paper we study the reasons for such acceleration.
Our mathematical analysis reveals that it is related to a specific matrix factorization of the Gram kernel of the dictionary, which attempts to nearly diagonalise the kernel with a basis that produces a small perturbation of the $\ell_1$ ball.
When this factorization succeeds, we prove that the resulting splitting algorithm enjoys an improved convergence bound with respect to the non-adaptive version.
Moreover, our analysis also shows that conditions for acceleration occur mostly at the beginning of the iterative process, consistent with numerical experiments.
We further validate our analysis by showing that on dictionaries where this factorization does not exist, adaptive acceleration fails.

\end{abstract}

\crossref{}



\section{Introduction}

Feature selection is a crucial point in high dimensional data analysis.
Different techniques have been developed to tackle this problem efficiently, and amongst them sparsity has emerged as a leading paradigm.
In statistics, the LASSO estimator \citep{Tibshirani1996} provides a reliable way to select features and has been extensively studied in the last two decades (\cite{Hastie2015} and references therein).
In machine learning and signal processing, sparse coding has made its way into several modern architectures, including large scale computer vision \citep{coates2011importance} and biologically inspired models \citep{cadieu2012learning}.
Also, Dictionary learning is a generic unsupervised learning method to perform nonlinear dimensionality reduction with efficient computational complexity \citep{Mairal2009a}.
All these techniques heavily rely on the resolution of $\ell_1$-regularized least squares.

The $\ell_1$-sparse coding problem is defined as solving, for a given input $x \in \R^n$ and dictionary $D \in \R^{n \times m}$, the following problem: 
\begin{equation}
\label{eq:sparsecoding}
z^*(x) = \arg\min_z F_x(z) \overset{\Delta}{=}\frac{1}{2} \| x - Dz \|^2 + \lambda \|z \|_1~.
\end{equation}
This problem  is convex and can therefore be solved using convex optimization machinery.
Proximal splitting methods \citep{Beck2009} alternate between the minimization of the smooth and differentiable part using the gradient information and the minimization of the non-differentiable part using a proximal operator \citep{Combettes2011}.
These methods can also be accelerated by considering a momentum term, as it is done in FISTA \citep{Beck2009, Nesterov2005}.
Coordinate descent \citep{Friedman2007,Osher2009} leverages the closed formula that can be derived for optimizing the problem \autoref{eq:sparsecoding} for one coordinate $z_i$ given that all the other are fixed.
At each step of the algorithm, one coordinate is updated to its optimal value, which yields an inexpensive scheme to perform each step.
The choice of the coordinate to update at each step is critical for the performance of the optimization procedure.
Least Angle Regression (LARS) \citep{Hesterberg2008} is another method that computes the whole LASSO regularization path.
These algorithms all provide an optimization procedure that leverages the local properties of the cost function iteratively.
They can be shown to be optimal among the class of first-order methods for generic convex, non-smooth functions \citep{bubeck2014theory}.

But all these results are given in the worst case and do not use the distribution of the considered problem.
One can thus wonder whether a more efficient algorithm to solve \autoref{eq:sparsecoding} exists for a fixed dictionary $D$ and generic input $x$ drawn from a certain input data distribution.
In \cite{Gregor10}, the authors introduced LISTA, a trained version of ISTA that adapts the parameters of the proximal splitting algorithm to approximate the solution of the LASSO using a finite number of steps.
This method exploits the common structure of the problem to learn a better transform than the generic ISTA step.
As ISTA is composed of a succession of linear operations and piecewise non linearities, the authors use the neural network framework and the backpropagation to derive an efficient procedure solving the LASSO problem.
In \cite{Sprechmann2012}, the authors extended LISTA to more generic sparse coding scenarios and showed that adaptive acceleration is possible under general input distributions and sparsity conditions.

In this paper, we are interested in the following question: Given a finite computational budget, what is the optimum estimator of the sparse coding?
This question belongs to the general topic of computational tradeoffs in statistical inference.
Randomized sketches \citep{alaoui2015fast, yang2015randomized} reduce the size of convex problems by projecting expensive kernel operators into random subspaces, and reveal a tradeoff between computational efficiency and statistical accuracy.
\cite{agarwal2012computational} provides several theoretical results on perfoming inference under various computational constraints, and \cite{chandrasekaran2013computational} considers a hierarchy of convex relaxations that provide practical tradeoffs between accuracy and computational cost.
More recently, \cite{oymak2015sharp} provides sharp time-data tradeoffs in the context of linear inverse problems, showing the existence of a phase transition between the number of measurements and the convergence rate of the resulting recovery optimization algorithm.
\cite{giryes2016tradeoffs} builds on this result to produce an analysis of LISTA that describes acceleration in conditions where the iterative procedure has linear convergence rate.
Finally, \cite{xin2016maximal} also studies the capabilities of Deep Neural networks at approximating sparse inference.
The authors show that unrolled iterations lead to better approximation if one allows the weights to vary at each layer, contrary to standard splitting algorithms.
Whereas their  focus is on relaxing the convergence hypothesis of iterative thresholding algorithms, we study a complementary question, namely when is speedup possible, without assuming strongly convex optimization.
Their results are consistent with ours, since our analysis also shows that learning shared layer weights is less effective.

Inspired by the LISTA architecture, our mathematical analysis reveals that adaptive acceleration is related to a specific matrix factorization of the Gram matrix of the dictionary $B = D\tran D$ as $ B = A\tran S A - R~,$where $A$ is unitary, $S$ is diagonal and the residual is positive semidefinite: $R \succeq 0$.
Our factorization balances between near diagonalization by asking that $\|R \|$ is small and small perturbation of the $\ell_1$ norm, \emph{i.e.} $\| A z \|_1 - \|z \|_1$ is small.
When this factorization succeeds, we prove that the resulting splitting algorithm enjoys a convergence rate with improved constants with respect to the non-adaptive version.
Moreover, our analysis also shows that acceleration is mostly possible at the beginning of the iterative process, when the current estimate is far from the optimal solution, which is consistent with numerical experiments.
We also show that the existence of this factorization is not only sufficient for acceleration, but also necessary.
This is shown by constructing dictionaries whose Gram matrix diagonalizes in a basis that is incoherent with the canonical basis, and verifying that LISTA fails in that case to accelerate with respect to ISTA.

In our numerical experiments, we design a specialized version of LISTA called FacNet, with more constrained parameters, which is then used as a tool to show that our theoretical analysis captures the acceleration mechanism of LISTA.
Our theoretical results can be applied to FacNet and as LISTA is a generalization of this model, it always performs at least as well, showing that the existence of the factorization is a sufficient certificate for acceleration by LISTA.
Reciprocally, we show that for cases where no acceleration is possible with FacNet, the LISTA model also fail to provide acceleration, linking the two speedup mechanisms.
This numerical evidence suggest that the existence of our proposed factorization is sufficient and somewhat necessary for LISTA to show good results.

The rest of the paper is structured as follows.
\autoref{sec:math} presents our mathematical analysis and proves the convergence of the adaptive algorithm as a function of the quality of the matrix factorization.
In \autoref{sec:generic}, we prove that for generic dictionaries, drawn uniformly on the $\ell_2$ unit sphere, it is possible to accelerate ISTA in our framework.
Finally, \autoref{sec:exp} presents the generic architectures that will enable the usage of such schemes and the numerical experiments, which validate our analysis over a range of different scenarios.

\crossref{}

\section{Accelerating Sparse Coding with Sparse Matrix Factorizations}
\label{sec:math}

\subsection{Unitary Proximal Splitting}
In this section we describe our setup for accelerating sparse coding based on the Proximal Splitting method.
Let $\Omega \subset \R^n$ be the set describing our input data, and $D \in \R^{n \times m}$ be a dictionary, with $m > n$.
We wish to find fast and accurate approximations of the sparse coding $z^*(x)$ of any $x \in \Omega$, defined in \autoref{eq:sparsecoding}
For simplicity, we denote $B = D\tran D$ and $y = D^\dagger x$ to rewrite \autoref{eq:sparsecoding} as
\begin{equation}
	\label{sc2}
	z^*(x) = \arg \min_z  F_x(z) = \underbrace{\frac{1}{2}(y -  z)\tran B (y-z)}_{E(z)}  + \underbrace{\vphantom{\frac{1}{2}}\lambda \| z \|_1}_{G(z)} ~.
\end{equation}
For clarity, we will refer to $F_x$ as $F$ and to $z^*(x)$ as $z^*$.
The classic proximal splitting technique finds $z^*$ as the limit of sequence $(z_k)_k$, obtained  by successively constructing a surrogate loss $F_k(z)$ of the form 
\begin{equation}
	F_k(z) = E(z_k) + (z_k-y)\tran B(z-z_k) + L_k \| z - z_k \|_2^2 + \lambda \| z \|_1~,
\end{equation}
satisfying $F_k(z) \geq F(z) $ for all $z \in \R^m$ .
Since $F_k$ is separable in each coordinate of $z$, $z_{k+1} = \arg\min_z F_k(z)$ can be computed efficiently.
This scheme is based on a majoration of the quadratic form $(y-z)\tran B (y-z)$ with an isotropic quadratic form $L_k \| z_k - z \|_2^2$.
The convergence rate of the splitting algorithm is optimized by choosing
$L_k$ as the smallest constant satisfying $F_k(z) \geq F(z)$, which corresponds to the largest singular value of $B$.

The computation of $z_{k+1}$ remains separable by replacing the quadratic form $L_k \textbf{I}$ by any diagonal form.
However, the Gram matrix $B = D\tran D$ might be poorly approximated via diagonal forms for general dictionaries.
Our objective is to accelerate the convergence of this algorithm by finding appropriate factorizations of the matrix $B$ such that 
\[B \approx A\tran S A~,~\text{ and } ~\| A z \|_1 \approx \|z \|_1~,\]
where $A$ is unitary and $S$ is diagonal positive definite.
Given a point $z_k$ at iteration $k$,  we can rewrite $F(z)$ as
\begin{equation}
F(z) = E(z_k)  + ( z_k - y )\tran B(z - z_k) + Q_B(z, z_k)~,
\end{equation}
with $\displaystyle Q_B(v, w ) := \frac{1}{2}(v - w)\tran B (v - w) + \lambda \|v \|_1 ~$.
For any diagonal positive definite matrix $S$ and unitary matrix $A$, the surrogate loss
\mbox{$\widetilde{F}(z,z_k) := E(z_k)  + ( z_k - y)\tran B(z - z_k) + Q_S(A z, Az_k)$}
can be explicitly minimized, since 
\begin{eqnarray}
\label{eq:prox}
\arg\min_z \widetilde{F}(z,z_k) & = & A\tran \arg\min_u \left( (z_k - y )\tran B A\tran (u - Az_k) + Q_S(u, Az_k)\right) \nonumber \\
&=& A\tran \arg\min_u Q_S\left(u, Az_k - S^{-1}AB(z_k - y)\right)
\end{eqnarray}
where we use the variable change $u = Az$.
As $S$ is diagonal positive definite, \autoref{eq:prox} is separable and can be computed easily, using a linear operation followed by a point-wise non linear soft-thresholding.
Thus, any couple $(A, S)$ ensures an computationally cheap scheme.
The question is then how to factorize $B$ using $S$ and $A$ in an optimal manner, that is, such that the resulting proximal splitting sequence converges as fast as possible to the sparse coding solution.

\subsection{Non-asymptotic Analysis}
We will now establish convergence results based on the previous factorization.
These bounds will inform us on 
how to best choose the factors $A_k$ and $S_k$ in each iteration.

For that purpose, let us define 
\begin{equation}
\label{eq:eqdelta}
\delta_A(z) = \lambda \left(\| A z \|_1 - \| z\|_1 \right)~,~\text{and }~R = A\tran S A - B~.
\end{equation}
The quantity $\delta_A(z)$ thus measures how invariant the $\ell_1$ norm is to the unitary operator $A$, whereas $R$ corresponds to the residual of approximating the original Gram matrix $B$ by our factorization $A\tran SA$ .
Given a current estimate $z_k$, we can rewrite
\begin{equation}
\label{eq:surro1}
\widetilde{F}(z,z_k) = F(z) + \frac{1}{2}(z - z_k)\tran R (z - z_k) + \delta_A(z)~.
\end{equation}

By imposing that $R$ is a positive semidefinite residual one immediately obtains the following bound.
\begin{restatable}{proposition}{costUpdate}
	\label{prop:cost_update}
	Suppose that $R = A\tran S A - B$ is positive definite, and define 
	\begin{flalign}
		&& z_{k+1} = \arg\min_z \widetilde{F}(z, z_k)~&.
		\label{eq:algo}\\
		&\makebox[0pt][l]{Then } &F(z_{k+1}) - F(z^*) \leq \frac{1}{2} \| R \| \| z_k - z^* \|_2^2 +&  \delta_A(z^*) - \delta_A(z_{k+1})~.&
		\label{eq:bo1}
	\end{flalign}
\end{restatable}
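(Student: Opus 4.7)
The plan is to exploit equation~\eqref{eq:surro1}, which rewrites the surrogate $\widetilde{F}(z,z_k)$ as the true objective $F(z)$ plus a quadratic correction in $R$ and the $\ell_1$ perturbation term $\delta_A(z)$. This immediately turns the proposition into a comparison between the surrogate evaluated at $z_{k+1}$ and at $z^*$, which is controlled by the defining property of $z_{k+1}$ as the minimizer of $\widetilde{F}(\,\cdot\,,z_k)$.

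More precisely, I would first write the optimality of $z_{k+1}$ as
\[
\widetilde{F}(z_{k+1}, z_k) \;\leq\; \widetilde{F}(z^*, z_k)~,
\]
then substitute \eqref{eq:surro1} on both sides to obtain
\[
F(z_{k+1}) + \tfrac{1}{2}(z_{k+1}-z_k)\tran R (z_{k+1}-z_k) + \delta_A(z_{k+1}) \leq F(z^*) + \tfrac{1}{2}(z^*-z_k)\tran R (z^*-z_k) + \delta_A(z^*)~.
\]
Rearranging gives $F(z_{k+1}) - F(z^*)$ on the left and the difference of the two quadratic forms plus $\delta_A(z^*) - \delta_A(z_{k+1})$ on the right.

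The next step is the only place where the positive semidefiniteness hypothesis is used: since $R \succeq 0$, the term $\tfrac{1}{2}(z_{k+1}-z_k)\tran R (z_{k+1}-z_k)$ is nonnegative and can simply be dropped, while the remaining quadratic form is bounded above by $\tfrac{1}{2} \| R \| \, \| z^*-z_k \|_2^2$ using the operator norm. Combining these two estimates yields exactly \eqref{eq:bo1}.

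There is no genuine obstacle here: the argument is essentially one line once \eqref{eq:surro1} is in hand, and the role of $R \succeq 0$ is only to make the dropped cross term have the right sign. The only mild subtlety is to resist the temptation to also bound $\delta_A(z^*) - \delta_A(z_{k+1})$ at this stage; it should be kept as is, because later analysis (choosing $A_k$ adaptively) will trade off $\|R\|$ against this $\ell_1$-perturbation term, and collapsing it prematurely would obscure that tradeoff.
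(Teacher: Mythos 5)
Your argument is essentially identical to the paper's: you invoke optimality of $z_{k+1}$ to get $\widetilde{F}(z_{k+1},z_k)\leq\widetilde{F}(z^*,z_k)$, substitute the decomposition \eqref{eq:surro1}, drop the nonnegative term $\tfrac{1}{2}(z_{k+1}-z_k)\tran R(z_{k+1}-z_k)$ using $R\succeq 0$, and then bound the remaining quadratic form by $\tfrac{1}{2}\|R\|\,\|z^*-z_k\|_2^2$. The paper merely presents the same chain of inequalities starting from a telescoping rewrite of $F(z_{k+1})-F(z^*)$; the logic and use of hypotheses are the same.
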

\begin{proof}
By definition of $z_{k+1}$ and using the fact that $R \succ 0$ we have
\begin{eqnarray*}
F(z_{k+1}) - F(z^*) & \leq & F(z_{k+1}) - \widetilde{F}(z_{k+1},z_k) + \widetilde{F}(z^*,z_k) - F(z^*)  \\
&=& - \frac{1}{2}(z_{k+1} - z_k)\tran R (z_{k+1} - z_k) - \delta_A(z_{k+1}) + \frac{1}{2}(z^* - z_k)\tran R (z^* - z_k) + \delta_A(z^*) \\
&\leq& \frac{1}{2}(z^* - z_k)\tran R (z^* - z_k) + \left(\delta_A(z^*) - \delta_A(z_{k+1})\right) ~.
\end{eqnarray*}
where the first line results from the definition of $z_{k+1}$ and the third line makes use of $R$ positiveness.
\end{proof}

This simple bound reveals that to obtain fast approximations to the sparse coding 
it is sufficient to find $S$ and $A$ such that $\| R \|$ is small and that the $\ell_1$ commutation term $\delta_A$ is small.
These two conditions will be often in tension: one can always obtain $R\equiv 0$ by 
using the Singular Value Decomposition of $B=A_0\tran S_0 A_0$ and setting $A = A_0$ and $S = S_0$.
However, the resulting $A_0$ might introduce large commutation error $\delta_{A_0}$.
Similarly, as the absolute value is non-expansive, \emph{i.e.} $\left| |a| - |b| \right| \leq \left| a - b \right| $,  we have that
\begin{eqnarray}
\label{eq:bo2}
|\delta_A(z)| =  \lambda \left| \| A z \|_1 - \| z \|_1 \right| &\leq & \lambda \| (A - {\bf I}) z \|_1 \\ 
&\leq & \lambda \sqrt{2 \max( \| Az\|_0, \| z \|_0)}~\cdot\, \| A - {\bf I} \| ~\cdot\,\|z \|_2~, \nonumber 
\end{eqnarray}
where we have used the Cauchy-Schwartz inequality $\| x \|_1 \leq \sqrt{ \| x \|_0} \|x \|_2$ in the last equation.
In particular, \autoref{eq:bo2} shows that unitary matrices in the neighborhood of ${\bf I}$ with $\| A - {\bf I} \|$ small have small $\ell_1$ commutation error $\delta_A$ but can be inappropriate to approximate general $B$ matrix.

The commutation error also depends upon the sparsity of $z$ and $Az$~.
If both $z$ and $Az$ are sparse then the commutation error is reduced, which can be achieved if $A$ is itself a sparse unitary matrix.
Moreover, since 
\[|\delta_A(z) - \delta_A(z')| \leq \lambda| \|z \|_1 - \| z' \|_1| + \lambda| \| A z \|_1 - \| A z'\|_1 |~\]
\[ \text{and }~ | \|z \|_1 - \| z'\|_1 | \leq \| z - z' \|_1 \leq \sqrt{\| z- z' \|_0 } \| z - z'\|_2\]
it results that $\delta_A$ is Lipschitz with respect to the Euclidean norm; let us denote by $L_A(z)$ its local Lipschitz constant in z, which can be computed using the norm of the subgradient in $z$\footnote{
This quantity exists as $\delta_A$ is a difference of convex.
See proof of \autoref{prop:bound_error} in appendices for details.
}.
An uniform upper bound for this constant is $(1+\|A\|_1)\lambda\sqrt{m}$, but it is typically much smaller when $z$ and $Az$ are both sparse.\\
Equation \autoref{eq:algo} defines an iterative procedure determined by the pairs $\{(A_k, S_k)\}_k$.
The following theorem uses the previous results to compute an upper bound of the resulting 
sparse coding estimator.
\begin{restatable}{theorem}{convRate}
	\label{thm:conv_rate}
	Let $A_k, S_k$ be the pair of unitary and diagonal matrices corresponding to iteration $k$, 
	chosen such that $R_k = A_k\tran S_k A_k - B \succ 0$.
	It results that
	{\small 
	\begin{equation}
		\label{eq:zo1}
		F(z_{k}) - F(z^*) \leq \frac{(z^*- z_0)\tran R_0 (z^* - z_0) + 2 L_{A_0}(z_1) \|  z^*-z_{1} \|_2 }{2k} + \frac{\alpha - \beta}{2k}~,
	\end{equation}
	\begin{eqnarray*}
	\text{with } & \alpha = &\sum_{i=1}^{k-1} \left(  2L_{A_i}(z_{i+1})\|z^*-z_{i+1} \|_2 + (z^* - z_i)\tran ( R_{i-1} - R_{i}) (z^* - z_i) \right)~,\\
	& \beta = &\sum_{i=0}^{k-1} (i+1)\left((z_{i+1}-z_i)\tran R_i (z_{i+1}-z_i) + 2\delta_{A_i}(z_{i+1}) - 2\delta_{A_i}(z_i) \right)~,
\end{eqnarray*}
}
where $L_A(z)$ denote the local lipschitz constant of $\delta_A$ at $z$.
\end{restatable}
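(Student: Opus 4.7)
The plan is to mirror the Beck--Teboulle $O(1/k)$ analysis of ISTA, adapted to the sequence of factorizations $(A_i, S_i)$, by combining two per-iteration inequalities through a weighted Abel summation. The first ingredient is a sharpened form of Proposition~\ref{prop:cost_update}: by retaining the $-\tfrac{1}{2}(z_{i+1}-z_i)\tran R_i(z_{i+1}-z_i)$ term that is discarded at the very end of its proof, one obtains
\[
2\Delta_{i+1} \le (z^*-z_i)\tran R_i(z^*-z_i) - (z_{i+1}-z_i)\tran R_i(z_{i+1}-z_i) + 2\bigl(\delta_{A_i}(z^*)-\delta_{A_i}(z_{i+1})\bigr),
\]
with $\Delta_j := F(z_j)-F(z^*)$. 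The second ingredient is the quasi-descent estimate produced by testing $z_i$ against $z_{i+1}$ in the surrogate, i.e.\ $\widetilde F(z_{i+1},z_i)\le \widetilde F(z_i,z_i)$, which rearranges to
\[
\Delta_{i+1}-\Delta_i \le -\tfrac{1}{2}(z_{i+1}-z_i)\tran R_i(z_{i+1}-z_i) + \delta_{A_i}(z_i)-\delta_{A_i}(z_{i+1}).
\]
This is strictly weaker than true monotonicity by the sign-indefinite $\delta_A$ drift, which is precisely what will be collected into $\beta$.

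I would then run a weighted Abel argument with weights $(i+1)$: write
\[
2(i+1)\Delta_{i+1}-2i\Delta_i = 2\Delta_{i+1} + 2i\bigl(\Delta_{i+1}-\Delta_i\bigr),
\]
bound $2\Delta_{i+1}$ by the sharpened Proposition~\ref{prop:cost_update} and $2i(\Delta_{i+1}-\Delta_i)$ by $2i$ times the quasi-descent inequality, then sum $i=0,\dots,k-1$. The left-hand side telescopes to $2k\Delta_k$. The gap $\delta_{A_i}(z^*)-\delta_{A_i}(z_{i+1})$ appearing in the first-term bound is then controlled by the local Lipschitz estimate $L_{A_i}(z_{i+1})\|z^*-z_{i+1}\|_2$, whose validity rests on the DC subgradient argument deferred to the footnote; this produces the Lipschitz contributions in $C_0$ (for $i=0$) and in $\alpha$ (for $i\ge 1$).

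The remaining step is algebraic reorganization. The quadratic $s_i := (z_{i+1}-z_i)\tran R_i(z_{i+1}-z_i)$ terms inherit a coefficient $-(i+1)$ after the telescope, and combining them with the weighted $\delta_A$-differences via the identity $2i = 2(i+1)-2$ reassembles them exactly into $-\beta$. The residual quadratic terms $r_i := (z^*-z_i)\tran R_i(z^*-z_i)$ for $i\ge 1$ are rewritten relative to the preceding metric by the identity $r_i = (z^*-z_i)\tran R_{i-1}(z^*-z_i) - (z^*-z_i)\tran(R_{i-1}-R_i)(z^*-z_i)$; the metric-difference summand furnishes the quadratic contribution to $\alpha$, while the ``pre-metric'' piece is absorbed against the quadratic descent $s_{i-1}$ produced at step $i-1$. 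The $i=0$ case is handled separately and supplies $r_0$ together with $2L_{A_0}(z_1)\|z^*-z_1\|_2$ as the $C_0$ numerator.

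The main obstacle is this final bookkeeping: tracking the varying metric $R_i$ across iterations so as to pair each $r_i$ with the $s_{i-1}$ produced by the previous quasi-descent step, and verifying that every $\delta_A$-drift is routed either into the Lipschitz term of $\alpha$ (through the bound against $z^*$) or into the consecutive-iterate $\beta$ contribution with weight $(i+1)$. The argument uses only the positive semidefiniteness of each $R_i$, so no comparison between successive residuals has to be assumed.
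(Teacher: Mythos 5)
Your high-level plan — mirror the Beck--Teboulle $O(1/k)$ argument by combining a per-iteration cost bound with a quasi-descent bound through a weighted Abel summation — does capture the skeleton of the paper's proof. But the proof in the paper does \emph{not} start from (the penultimate line of) Proposition~\ref{prop:cost_update}: it starts from Proposition~\ref{prop:bound_error} in the appendix, which bounds $F(z_{n+1})-F(z^*)$ by
\[
\tfrac{1}{2}\bigl((z^*-z_n)\tran R_n(z^*-z_n)-(z^*-z_{n+1})\tran R_n(z^*-z_{n+1})\bigr)
+\langle \partial\delta_{A_n}(z_{n+1}),\, z^*-z_{n+1}\rangle .
\]
This differs from your ``sharpened Proposition~\ref{prop:cost_update}'' in two ways that are essential to the bookkeeping, and your proposal breaks on both.

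First, the quadratic correction in Proposition~\ref{prop:bound_error} is $-(z^*-z_{n+1})\tran R_n(z^*-z_{n+1})$, whereas yours is $-(z_{n+1}-z_n)\tran R_n(z_{n+1}-z_n)=-s_n$. The paper's form Abel-telescopes in $n$: the $-(z^*-z_{n+1})\tran R_n(z^*-z_{n+1})$ at step $n$ pairs with $(z^*-z_{n+1})\tran R_{n+1}(z^*-z_{n+1})$ at step $n+1$ to produce exactly the $(z^*-z_i)\tran(R_{i-1}-R_i)(z^*-z_i)$ summand of $\alpha$, plus the boundary $(z^*-z_0)\tran R_0(z^*-z_0)$. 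With your form the sum $\sum_i (z^*-z_i)\tran R_i(z^*-z_i)$ simply does not telescope, because $s_{i-1}$ is a quadratic in $z_i - z_{i-1}$ while the "pre-metric" piece $(z^*-z_i)\tran R_{i-1}(z^*-z_i)$ is a quadratic in $z^*-z_i$; there is no inequality that absorbs one against the other. The sentence ``the pre-metric piece is absorbed against the quadratic descent $s_{i-1}$'' is the place where the argument would actually have to do real work, and as stated it is false.

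Second, the weights on the consecutive-iterate $\delta_A$-drift do not come out right. Your Abel decomposition weights the quasi-descent inequality by $2i$, so the resulting drift term is $-2\sum_i i\,(\delta_{A_i}(z_{i+1})-\delta_{A_i}(z_i))$, whereas $\beta$ carries weight $(i+1)$. The identity $2i=2(i+1)-2$ does not fix this on its own: it leaves an unmatched residual $+2\sum_i(\delta_{A_i}(z_{i+1})-\delta_{A_i}(z_i))$. The paper avoids this entirely because it combines the (unweighted) sum of the Proposition~\ref{prop:bound_error} bounds with the $(n+1)$-weighted quasi-descent sum directly — the telescoping identity $\sum_{n=0}^{k-1}(n+1)(F(z_n)-F(z_{n+1}))=\sum_{n=0}^{k-1}F(z_n)-kF(z_k)$ does the bookkeeping and the $(i+1)$ weight lands on $\beta$ immediately.

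Finally, a more technical but genuine point: you bound the function difference $\delta_{A_i}(z^*)-\delta_{A_i}(z_{i+1})$ by $L_{A_i}(z_{i+1})\,\|z^*-z_{i+1}\|_2$ where $L_{A_i}(z_{i+1})$ is the \emph{local} Lipschitz constant at $z_{i+1}$, i.e.\ the norm of the subgradient there. Since $\delta_A$ is a difference of convex functions (neither convex nor concave), this estimate does not hold over an arbitrary segment from $z_{i+1}$ to $z^*$. Proposition~\ref{prop:bound_error} sidesteps this by producing the \emph{linear} quantity $\langle\partial\delta_{A_i}(z_{i+1}),\,z^*-z_{i+1}\rangle$ — obtained from a convexity argument applied to $F$, not to $\delta_A$ — which Cauchy--Schwarz bounds exactly by the local constant at $z_{i+1}$. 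Your route never constructs that linear form, so the appeal to the footnote does not close the gap.

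In short: starting from the optimality comparison $\widetilde F(z_{k+1},z_k)\le\widetilde F(z^*,z_k)$ (Proposition~\ref{prop:cost_update}) instead of the convexity-of-$F$ argument (Proposition~\ref{prop:bound_error}) changes the quadratic book-keeping in a way that does not telescope, mismatches the $\delta_A$ weights, and forces an invalid Lipschitz step. You would need to rederive and use Proposition~\ref{prop:bound_error} to obtain the theorem as stated.
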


\paragraph{Remark}
If one sets $A_k={\bf I}$ and $S_k = \| B \| {\bf I}$ for all $k \ge 0$,
\autoref{eq:zo1} corresponds to the bound of the ISTA algorithm \citep{Beck2009}.

The proof is deferred to \autoref{sec:proof_conv}.
We can specialize the theorem in the case when $A_0, S_0$ are chosen to minimize the bound \autoref{eq:bo1} and 
$A_k = {\bf I}$, $S_k = \| B \| {\bf I}$ for $k \ge 1$.
\begin{restatable}{corollary}{oneStep}
\label{corr:one_step}
If $A_k = {\bf I}$, $S_k = \| B \| {\bf I}$ for $k \ge 1$ then
{\small 
\begin{equation}
\label{zo22}
F(z_{k}) - F(z^*) \leq \frac{(z^*- z_0)\tran R_0 (z^* - z_0) + 2 L_{A_0}(z_1) (\| z^*-z_1\|  + \| z_1 - z_0\|) + (z^* - z_1)\tran R_0 (z^* - z_1)\tran}{2k}~.
\end{equation}}
\end{restatable}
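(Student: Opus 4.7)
}
The plan is to start from the general bound in Theorem~\ref{thm:conv_rate} and substitute the specialized choice $A_k = \mathbf{I}$, $S_k = \|B\|\mathbf{I}$ for $k \ge 1$. Because $A_k = \mathbf{I}$, we immediately get $\delta_{A_k}(z) = 0$ and $L_{A_k}(z) = 0$ for every $z$ whenever $k \ge 1$. Moreover, the residual stabilizes: writing $R_\infty := \|B\|\mathbf{I} - B$, one has $R_k = R_\infty$ for all $k\ge 1$, and $R_\infty \succeq 0$ since $\|B\|$ is the largest eigenvalue of $B$, which confirms the positivity hypothesis of Theorem~\ref{thm:conv_rate} at every iteration.

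Next, I would simplify $\alpha$ and $\beta$ term by term. In $\alpha$, the Lipschitz contributions $2L_{A_i}(z_{i+1})\|z^*-z_{i+1}\|_2$ vanish for $i\ge 1$, and the telescoping differences $R_{i-1}-R_i$ are zero except at $i=1$, leaving only a single surviving contribution $(z^*-z_1)^{\!\top}(R_0-R_\infty)(z^*-z_1)$. In $\beta$, every $\delta_{A_i}$ difference vanishes for $i\ge 1$, so only two kinds of terms remain: the quadratic terms $(i+1)(z_{i+1}-z_i)^{\!\top} R_i (z_{i+1}-z_i)$, which are all non-negative by $R_i\succeq 0$ and can therefore be safely lower-bounded by zero when upper-bounding $-\beta$, and a single nontrivial index $i=0$ contributing $(z_1-z_0)^{\!\top}R_0(z_1-z_0) + 2\delta_{A_0}(z_1) - 2\delta_{A_0}(z_0)$.

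From these simplifications I would then upper bound $-\beta$. Dropping the non-negative quadratic $(z_1-z_0)^{\!\top}R_0(z_1-z_0)$, it only remains to control $2\delta_{A_0}(z_0) - 2\delta_{A_0}(z_1)$; using the local Lipschitz property of $\delta_{A_0}$ described in the excerpt, this is bounded by $2L_{A_0}(z_1)\|z_1-z_0\|_2$. Finally, to match the exact form stated in the corollary, I would replace $(z^*-z_1)^{\!\top}(R_0-R_\infty)(z^*-z_1)$ by the cleaner $(z^*-z_1)^{\!\top}R_0(z^*-z_1)$, which is a valid relaxation since $R_\infty \succeq 0$. Combining this with the two Lipschitz terms $2L_{A_0}(z_1)\|z^*-z_1\|_2$ (from the leading part of Theorem~\ref{thm:conv_rate}) and $2L_{A_0}(z_1)\|z_1-z_0\|_2$ (from $-\beta$), factoring $2L_{A_0}(z_1)$, and keeping the original quadratic $(z^*-z_0)^{\!\top}R_0(z^*-z_0)$ yields the stated inequality.

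The main obstacle I anticipate is the careful handling of the $\delta_{A_0}$ term in the $i=0$ contribution to $\beta$: it is the only place where a sign could go the wrong way, and it is the reason a Lipschitz constant reappears even though all subsequent iterations use $A_k=\mathbf{I}$. Everything else is bookkeeping: checking that no indexing mistake creeps in when collapsing the telescoping sum in $\alpha$, and that the relaxation from $R_0-R_\infty$ to $R_0$ is used only at the very end so as not to waste tightness elsewhere.
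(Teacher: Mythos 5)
Your proof is correct and follows the same route as the paper: substitute the special choice $A_k=\mathbf{I}$, $S_k=\|B\|\mathbf{I}$ into Theorem~\ref{thm:conv_rate} and collapse $\alpha$ and $\beta$. The paper's own proof is only the one-line remark that $R_{n-1}-R_n\equiv 0$ for $n>1$ and $\delta_{A_n}\equiv 0$ for $n>0$; you have explicitly worked out the two nonvanishing contributions it leaves implicit --- the surviving $\alpha$-term $(z^*-z_1)^{\!\top}(R_0-R_\infty)(z^*-z_1)$ relaxed to $(z^*-z_1)^{\!\top}R_0(z^*-z_1)$ using $R_\infty\succeq 0$, and the $i=0$ piece of $\beta$ handled by dropping the non-negative quadratic and Lipschitz-bounding $2\delta_{A_0}(z_0)-2\delta_{A_0}(z_1)$ by $2L_{A_0}(z_1)\|z_1-z_0\|_2$ --- exactly what must be done to recover the stated bound.
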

This corollary shows that by simply replacing the first step of ISTA by the modified proximal step detailed in \autoref{eq:prox}, 
one can obtain an improved bound at fixed $k$ as soon as 
\[2 \| R_0 \| \max(\| z^* - z_0 \|_2^2,\| z^* - z_1 \|_2^2) + 4 L_{A_0}(z_1) \max( \|z^* - z_0\|_2, \|z^* - z_1 \|_2) \leq \|B \| \| z^* - z_0\|_2^2~, \] 
which, assuming $\| z^* - z_0 \|_2 \geq \| z^* - z_1 \|_2 $, translates into
\begin{equation}
\| R_0 \| + 2 \frac{L_{A_0}(z_1)}{\| z^* - z_0\|_2} \leq \frac{\|B \|}{2}~.
\end{equation}
More generally, given a current estimate $z_k$, searching for a factorization $(A_k, S_k)$ will improve 
the upper bound when
\begin{equation}
\label{zo23}
\| R_k \| + 2 \frac{L_{A_k}(z_{k+1})}{\| z^* - z_k\|_2} \leq \frac{\|B \|}{2}~.
\end{equation}
We emphasize that this is not a guarantee of acceleration, since it is based on improving an upper bound.
However, it provides a simple picture on the mechanism that makes non-asymptotic acceleration possible.

\subsection{Interpretation}
\label{sub:interp}

In this section we analyze the consequences of \autoref{thm:conv_rate} in the design of fast sparse coding approximations, and provide a possible explanation for the behavior observed numerically.

\subsubsection{`Phase Transition" and Law of Diminishing Returns}
\label{sub:phase}
\autoref{zo23} reveals that the optimum matrix factorization in terms of minimizing the upper bound depends upon the current scale of the problem, that is, of the distance $\| z^* - z_k\|$.
At the beginning of the optimization, when $\| z^* - z_k\|$ is large, the bound \autoref{zo23} makes it easier to explore the space of factorizations $(A, S)$ with $A$ further away from the identity.
Indeed, the bound tolerates larger increases in $L_{A}(z_{k+1})$, which is dominated by 
\[
L_{A}(z_{k+1}) \leq \lambda (\sqrt{\|z_{k+1}\|_0} + \sqrt{\|Az_{k+1}\|_0})~, 
\]
\emph{i.e.} the sparsity of both $z_1$ and $A_0(z_1)$.
On the other hand, when we reach intermediate solutions $z_k$ such that $\| z^* - z_k\|$ is small with respect to $L_A(z_{k+1})$, the upper bound is minimized by choosing factorizations where $A$ is closer and closer to the identity, leading to the non-adaptive regime of standard ISTA ($A=Id$).

This is consistent with the numerical experiments, which show that the gains provided by learned sparse coding methods are mostly  concentrated in the first iterations.
Once the estimates reach a certain energy level, section \ref{sec:exp} shows that LISTA enters a steady state in which the convergence rate matches that of standard ISTA.

The natural follow-up question is to determine how many layers of adaptive splitting are sufficient before entering the steady regime of convergence.
A conservative estimate of this quantity would require an upper bound of  $\| z^* - z_k\|$ from the energy bound $F(z_k) - F(z^*)$.
Since in general $F$ is convex but not strongly convex, such bound does not exist unless one can assume that $F$ is locally strongly convex (for instance for sufficiently small values of $F$).

\subsubsection{Improving the factorization to particular input distributions}
\label{sub:distrib}
Given an input dataset $\mathcal{D}={(x_i, z^{(0)}_{i}, z^*_i)}_{i \leq N}$, containing examples $x_i \in \R^n$, initial estimates $z^{(0)}_{i}$ and sparse coding solutions $z^*_i$, the factorization adapted to $\mathcal{D}$ is defined as
\begin{equation}
\label{bty}
\min_{A,S;~ A\tran A = {\bf I}, A\tran S A - B \succ 0} \frac{1}{N} \sum_{i \leq N}  \frac{1}{2} ( z^{(0)}_{i} - z^*_i)\tran ( A\tran S A - B ) ( z^{(0)}_{i} - z^*_i) +  \delta_A(z^*_i) - \delta_A(z_{1,i})~.
\end{equation}
Therefore, adapting the factorization to a particular dataset, as opposed to enforcing it uniformly over a given ball $B(z^*; R)$ (where the radius $R$ ensures that the initial value $z_0 \in B(z^*;R)$), will always improve the upper bound \autoref{eq:bo1}.
Studying the gains resulting from the adaptation to the input distribution will be let for future work.

\crossref{}

\section{Generic gap control}
\label{sec:generic}

In this section, we consider the problem of accelerating the resolution of
\autoref{eq:sparsecoding} in the case where $D$ is a generic dictionary,
\emph{i.e.} its elements $D_i$ are draw uniformly over the $\ell_2$ unit-sphere.

\begin{definition}[Generic dictionary]
A dictionary $D \in \R^{p\times K}$ is a generic dictionary when its columns
$D_i$ are drawn uniformly over the $\ell_2$ unit sphere $\mathcal S^{p-1}$.
\end{definition}
The results by \cite{Song1997} show that such dictionaries emerge when the atoms
are drawn independently from normal distributions $\mathcal N(0, \I_p)$ and then
normalized on the unit sphere. Thus, $D_i = \frac{d_i}{\|d_i\|_2}$
with $d_i \sim \mathcal N(0, \I_p)$ for all $i \in \left \{ 1..K \right \}$.
In this context, we consider the matrices $A$ which are perturbation of the
identity and highlight the conditions under which it is possible to find
a perturbation of the identity $A$ which is more advantageous than the identity to
resolve \autoref{eq:sparsecoding}.
For a fixed integer $i \in [K]$, $e_i$ denotes the canonical direction and we introduce $\mathcal E_{\delta,i}$~, the ensemble such that
	\[
		\mathcal E_{\delta,i} = \left \{ u \in \R^K :
			\exists\mu < \delta ,~ 
			\exists h_i\in Span(e_i)^\perp \cap \mathcal S^{K-1}~~
			s.t~~  u = \sqrt{1 - \mu^2}e_i + \mu h_i\right \}~,
	\]
This ensemble contains the vectors which are mainly supported by one of the
canonical directions. Indeed, $\cup_{i=1}^K \mathcal E_{\delta,i} =
\left \{ u\in\R^K : \|u\|_2 = 1, \|u\|_\infty  > \sqrt{1-\delta^2}\right \}$ 
We will denote $A \subset \mathcal E_\delta$ when a matrix $A$ is such that
each of its columns $A_i$ are in $\mathcal E_{\delta, i}$. These matrices
are diagonally dominant and are close to the identity when $\delta$ is close to 0, 
as $\|A - I\|_F = K\delta~.$

%
%
%

\subsection{Control the deviation of the space rotation for $B$}
\label{sub:} 

First, we analyze the possible gain of replacing $B$ by an approximate
diagonalization $A^{-1}SA$ for a diagonally dominant matrix $A\subset \mathcal E_\delta~.$
We choose to study the case where $S$ is chosen deterministicaly when $A$ is fixed.
For $A, B$ fixed, we choose the matrix $S$ which minimizes the frobenius norm of the diagonalization error,
\ie
\begin{equation}
	\label{eq:diagS}
	S = \argmax_{S' diagonal}\|B - A\tran S'A\|_F
\end{equation}
This matrix $S$ can easily be computed as $S_{i,i} = A_i\tran BA_i~.$

\begin{restatable}{lemma}{controlR}
\label{lem:control_Rf}
For a generic dictionary $D$ and a diagonally dominant matrix $A \subset \mathcal E_{\delta}$,   
\begin{align*}
	\E[D]{\min_{A_i \in \mathcal E_{\delta, i}}
		\left\|A^{-1}SA - B\right\|_F^2}
	 \le& \frac{K(K-1)}{p}
	 - 4\delta(K-1)\sqrt{\frac{K}{p}}\\
	 	&~~~+ \delta^2 \left(8\E[D]{\|B\|_F^4} - 6\frac{K(K-1)}{p}\right) 
	 	+ \bO[\delta\to 0]{\delta^3}~. \label{eq:control_R_F}
\end{align*}
\end{restatable}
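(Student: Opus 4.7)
The plan is to upper-bound the expected minimum by exhibiting a particular $A \subset \mathcal{E}_\delta$ whose rows are aligned with the off-diagonal structure of $B$, and then averaging over the randomness in $D$.

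First, I would use the explicit formula $S_{i,i} = A_i^T B A_i$ together with unitary invariance of the Frobenius norm: for $A$ orthogonal,
\[
\|A^{-1}SA - B\|_F^2 \;=\; \|S - A B A^T\|_F^2 \;=\; \|B\|_F^2 \;-\; \sum_{i=1}^{K} (A_i^T B A_i)^2,
\]
where $A_i$ denotes the $i$-th row of $A$ viewed as a column vector. Minimising the left-hand side over $A \subset \mathcal{E}_\delta$ is thus equivalent to maximising $\sum_i (A_i^T B A_i)^2$. Writing $A_i = \sqrt{1-\mu_i^2}\, e_i + \mu_i h_i$ with $\mu_i \in [0,\delta)$ and $h_i \in \mathcal{S}^{K-1}\cap\mathrm{span}(e_i)^\perp$, and using $B_{ii} = 1$ for a normalised dictionary, a Taylor expansion in $\mu_i$ yields
\[
A_i^T B A_i \;=\; 1 + 2\mu_i\sqrt{1-\mu_i^2}\; b_i^T h_i + \mu_i^2 (h_i^T B h_i - 1),
\]
where $b_i := B e_i - e_i$ is the $i$-th column of $B$ with the diagonal zeroed out, so that $\|b_i\|_2^2 = \sum_{j \neq i} B_{ij}^2$.

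Second, I would choose the specific perturbation $\mu_i = \delta$ and $h_i = b_i / \|b_i\|_2$, which maximises the linear part $b_i^T h_i$. Squaring, summing, and subtracting from $\|B\|_F^2 = K + \|\mathrm{offdiag}(B)\|_F^2$ gives
\[
\|A^{-1}SA - B\|_F^2 \;\le\; \|\mathrm{offdiag}(B)\|_F^2 \;-\; 4\delta \sum_{i=1}^{K}\|b_i\|_2 \;+\; \delta^2\,\Phi(B) \;+\; O(\delta^3),
\]
for an explicit quadratic functional $\Phi(B)$ of the entries of $B$. A small subtlety is that this $A$ need not be exactly orthogonal: the defect from orthogonality is $O(\delta^2)$ and can be absorbed into the remainder by a perturbative Gram--Schmidt adjustment of the rows, which only modifies each $h_i$ by vectors of order $\delta$ times the mutual inner products $\hat b_i^T \hat b_j$ (which are of order $1/\sqrt{K}$ for generic $D$).

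Finally, I would take expectations over $D$. The leading term is $\mathbb{E}_D[\|\mathrm{offdiag}(B)\|_F^2] = K(K-1)/p$ because $\mathbb{E}[B_{ij}^2] = 1/p$ for $i \neq j$, and $\mathbb{E}_D[\Phi(B)]$ is controlled by $\mathbb{E}_D[\|B\|_F^4]$ and $K(K-1)/p$ via Cauchy--Schwarz, matching the claimed $\delta^2$ coefficient. The main technical obstacle is the first-order term: Jensen's inequality only provides the upper bound $\mathbb{E}[\|b_i\|_2] \le \sqrt{(K-1)/p}$, whereas reaching the stated coefficient $4\delta(K-1)\sqrt{K/p}$ requires the matching lower bound $K\,\mathbb{E}[\|b_i\|_2] \ge (K-1)\sqrt{K/p}$. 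I would establish this by exploiting the concentration of $\|b_i\|_2^2$ around its mean $(K-1)/p$: since $\|b_i\|_2^2 = \sum_{j \neq i} B_{ij}^2$ is a sum of $K-1$ weakly correlated squared coherences with bounded fourth moment, its variance is small (of order $1/(Kp)$), and a second-order Taylor expansion of $x \mapsto \sqrt{x}$ around the mean converts this concentration into the required lower bound on $\mathbb{E}[\|b_i\|_2]$.
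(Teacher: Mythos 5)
Your proposal follows the same basic strategy as the paper: rewrite $\|S-ABA^{\!\top}\|_F^2$ as $\|B\|_F^2-\sum_i(A_i^{\!\top}BA_i)^2$ with $S_{ii}=A_i^{\!\top}BA_i$, pick the greedy perturbation $h_i=b_i/\|b_i\|_2$ (the paper arrives at exactly this $h_i$ by decomposing $Be_i=e_i+z_2h_i$), Taylor-expand in $\delta$, and take expectations over $D$, using $\mathbb{E}[B_{ij}^2]=1/p$ for the leading term. Two places differ technically. First, the paper never orthogonalises $A$: it keeps the almost-unitary $A\subset\mathcal{E}_\delta$ and explicitly tracks $\nu=A^{\!\top}A-\mathbf{I}$ via Propositions 16--19, converting $\|A^{-1}SA-B\|_F^2$ into $\|S-ABA^{\!\top}\|_F^2+\|B\|_F^2\|\nu\|_F^2+O(\delta^3)$ with $\|\nu\|_F^2=4\delta^2K+O(\delta^3)$. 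That $\|B\|_F^2\|\nu\|_F^2$ piece is of order $\delta^2K\|B\|_F^2$ --- a genuine $\delta^2$-level contribution (proportional to $K\,\mathbb{E}\|B\|_F^2$), not something that is silently absorbed in the remainder. Your Gram--Schmidt alternative is plausible in spirit, but claiming the defect is "$O(\delta^2)$" glosses over the fact that $\nu_{ij}=O(\delta)$; what matters is that its cumulative effect lands in the $\delta^2$ coefficient, and your sketch does not actually track this contribution to the stated bound. Second, for the linear-in-$\delta$ term the paper lower-bounds $\mathbb{E}[\|b_i\|_2]$ using the closed-form mean of the $\chi_{K-1}$ distribution, $\mathbb{E}[Y_i]=\sqrt{2/p}\,\Gamma(K/2)/\Gamma((K-1)/2)\geq (K-1)/\sqrt{pK}$, whereas you propose a second-order Taylor expansion of $x\mapsto\sqrt{x}$ around $\mathbb{E}[\|b_i\|_2^2]=(K-1)/p$ combined with a variance estimate. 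Your route is valid (the relative variance of $\|b_i\|_2^2$ is $O(1/K)$, so the concavity correction is small enough to reproduce the $(K-1)/\sqrt{pK}$ bound for $K\geq 2$), and it is arguably more robust: the paper's identification of $p\|b_i\|_2^2$ with an exact $\chi^2_{K-1}$ variable implicitly treats the coordinates of the sphere-uniform atoms as Gaussian, which is only approximately correct. Net assessment: same skeleton, a somewhat looser treatment of the non-unitarity correction on your side, and a different but sound (perhaps sounder) route to the key first-order expectation bound.
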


\begin{proof}{\textbf{sketch for \autoref{lem:control_Rf}.}}
	{\small (The full proof can be found \autoref{sub:control_Rf})}\\
Using the properties of the matrix $A \subset \mathcal E_\delta$ we can show that
\begin{equation}
\label{eq:first_upper}
\left\|A^{-1}SA - B\right\|_F^2 
    \le \left\|B\right\|_F^2(1+8\delta^2K)
	- \sum_{i=1}^K \|DA_i\|_2^4 + \bO[\delta\to 0]{\delta^3}~.
\end{equation}
The first term is the squared Frobenius norm of
a Wishart matrix and we can show
\begin{align*}
\E[D]{\|B\|_F^2}
&= \frac{K(K-1)}{p} + K~.
\end{align*}
The columns $A_i$ are chosen in $\mathcal E_{\delta,i}$, we can thus show that 
\begin{equation}
\label{eq:maxDu}
\E[D]{\max_{u\in\mathcal E_{\delta, i}}\|D u\|_2^4} 
	\ge 1 + 4\delta\E[D]{
			\sqrt{\|D\tran d_i\|_2^2-1}}
		+ 6\delta^2\E[D]{\|D\tran d_i\|_2^2-1}
		+\bO[\delta\to 0]{\delta^3}~.
\end{equation}
	
Denoting $Y_i$ the random variable such that $pY_i^2 = p(\|D\tran d_i\|_2^2-1)$,
we can compute the lower bounds
\[
	\E[D]{Y_i} = \sqrt{\frac{2}{p}}
		\frac{\Gamma\left(\frac{K}{2}\right)}
			 {\Gamma\left(\frac{K-1}{2}\right)}
		\ge \frac{K-1}{\sqrt{pK}}
	~~~~\text{and}~~~~\E[D]{Y_i^2} = \frac{K-1}{p}
\]
Combining these results with \autoref{eq:maxDu} yields the following lower bound when
$\delta \to 0~$,
\begin{align*}
\E[D]{\max_{u\in\mathcal E_{\delta, i}}
	\|D\tran u\|_2^4} 
& \gtrsim 1 + 4\delta\frac{K-1}{\sqrt{pK}} +
		6\delta^2\frac{K-1}{p} + \bO[\delta\to 0]{\delta^3}
\end{align*}
The final bound is obtained using these results with \autoref{eq:first_upper}.
\end{proof}

%
%
%
\subsection{Controling $\E[z \sZ]{\delta_A(z)}$ }
\label{sub:da}

In this subsection, we analyze the deformation of the $\ell_1$-norm due to a rotation
of the code space with a diagonally dominant matrix $A\subset \mathcal E_\delta~.$

\begin{restatable}{lemma}{controlDa}
\label{lem:da}
Let $A\subset \mathcal E_\delta$ be a diagonally dominant matrix and
let $z$ be a {\it random variable} in $\R^K$  with {\it iid} coordinates $z_i$.
Then 
\[
	\E[z, D]{\delta_A(z)}  \le
\left(		\delta \sqrt{K-1} - \frac{\delta^2}{2} +
		\bO[\delta\to 0]{\delta^4}\right)\E[z]{\|z\|_1}
\]
\end{restatable}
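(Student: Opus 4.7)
The plan is to decompose $(Az)_j$ coordinate-wise using the parametrization of $\mathcal{E}_{\delta, i}$, apply the triangle inequality to each $|(Az)_j|$, then control the diagonal and off-diagonal contributions separately before taking expectation. Writing each column as $A_i = \sqrt{1-\mu_i^2}\,e_i + \mu_i h_i$ with $\mu_i \leq \delta$ and $h_i \in \mathrm{Span}(e_i)^\perp \cap \mathcal{S}^{K-1}$, I get
\[
(Az)_j \;=\; \sqrt{1-\mu_j^2}\,z_j + r_j~, \qquad r_j := \sum_{i \neq j} \mu_i (h_i)_j\, z_i~.
\]
Summing $|(Az)_j| \leq \sqrt{1-\mu_j^2}\,|z_j| + |r_j|$ over $j$ then yields
\[
\|Az\|_1 - \|z\|_1 \;\leq\; \sum_j \bigl(\sqrt{1-\mu_j^2}-1\bigr)|z_j| \;+\; \sum_j |r_j|~.
\]

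For the off-diagonal part, I would swap the order of summation to obtain $\sum_j |r_j| \leq \sum_i \mu_i |z_i|\, \|h_i\|_1$. Since each $h_i$ is supported on a $(K-1)$-dimensional subspace and has unit $\ell_2$ norm, Cauchy-Schwarz gives $\|h_i\|_1 \leq \sqrt{K-1}$; combined with $\mu_i \leq \delta$, this bounds the remainder by $\delta\sqrt{K-1}\,\|z\|_1$, yielding the leading linear contribution $\delta\sqrt{K-1}\,\mathbb{E}[\|z\|_1]$ after expectation. For the diagonal part, Taylor expanding $\sqrt{1-\mu_j^2}-1 = -\mu_j^2/2 + \mathcal{O}(\mu_j^4)$ and evaluating at the extremal $\mu_j = \delta$ (the worst case over $A \subset \mathcal{E}_\delta$) produces the quadratic correction $-\delta^2/2\cdot\mathbb{E}[\|z\|_1]$ up to an $\mathcal{O}(\delta^4)$ term. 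The iid assumption on the coordinates of $z$ ensures that $\mathbb{E}[|z_j|]$ is constant across $j$, so the sum $\sum_j \mathbb{E}[|z_j|]$ factors cleanly as $\mathbb{E}[\|z\|_1]$ even when the $\mu_j$ vary across coordinates.

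The main obstacle is the tension between the two contributions: the off-diagonal bound uses only the per-column norm $\|h_i\|_1$ together with the pointwise bound $\mu_i \leq \delta$, whereas the negative quadratic correction $-\delta^2/2$ requires the average of the $\mu_j^2$ to saturate the $\delta^2$ bound. Because the definition of $\mathcal{E}_\delta$ is a supremum constraint, the stated inequality should be read as the worst-case upper bound over admissible $A$, attained at the boundary where every $\mu_j = \delta$ and each $h_i$ saturates the $\ell_1$-$\ell_2$ Cauchy-Schwarz inequality on its $(K-1)$-dimensional support.
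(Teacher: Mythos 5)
Your proof is essentially the paper's proof, written in a slightly different order. The paper also reduces to the per-column bound $\|Az\|_1 \le \sum_i \|A_i\|_1 |z_i|$, bounds $\|A_i\|_1 \le \sqrt{1-\delta^2}+\delta\sqrt{K-1}$ exactly as you do (your diagonal/off-diagonal split is just $\|A_i\|_1 = \sqrt{1-\mu_i^2}+\mu_i\|h_i\|_1$ written out), and Taylor-expands. Where you invoke the iid assumption to keep $\E[|z_j|]$ constant across $j$, the paper uses the equivalent conditional-expectation identity $\E\bigl[|z_i|/\|z\|_1\,\big|\,\|z\|_1\bigr]=1/K$; these two routes are interchangeable under iid coordinates.

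The one place where your write-up is wobbly is the "tension" you flag at the end, and it is worth tightening. Bounding the off-diagonal remainder by $\mu_i\le\delta$ and, separately, declaring $\mu_j=\delta$ the worst case for the diagonal correction is inconsistent: for the diagonal term alone the worst case is $\mu_j=0$. The clean resolution is to keep the two contributions joint per coordinate, obtaining
\[
\|Az\|_1-\|z\|_1 \;\le\; \sum_i g(\mu_i)\,|z_i|,\qquad g(\mu):=\sqrt{1-\mu^2}-1+\mu\sqrt{K-1},
\]
and then observe that $g'(\mu)=\sqrt{K-1}-\mu/\sqrt{1-\mu^2}>0$ for $\mu$ small, so $g(\mu_i)\le g(\delta)$ uniformly. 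No averaging of the $\mu_j^2$ is needed, and no worst-case-$A$ heuristic: it is a deterministic, pathwise bound $\|Az\|_1-\|z\|_1 \le g(\delta)\|z\|_1$ valid for every $A\subset\mathcal E_\delta$, after which taking the expectation is trivial. In that form your argument is actually slightly cleaner than the paper's, since the conditional-expectation detour becomes unnecessary.
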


\begin{proof}{\textbf{sketch for \autoref{lem:da}.}}
	{\small (The full proof can be found \autoref{sub:control_da})}\\
First, we show that if $z$ is a {\it random variable} in $\R^K$  with {\it iid} coordinates $z_i$, then 
\[
	\E[z,D]{\frac{\|Az\|1}{\|z\|_1} \middle| \|z\|_1} 
	\le \frac{\E[D]{\|A\|_{1,1}}}{K}~.
\]
This permits to decouple the expectations and we obtain the following upper bound
\[
	\E[z]{\delta_A(z)} \le \frac{\E[D]{\|A\|_{1,1}} 
		- \|\I\|_{1,1}}{K} \E[z]{\|z\|_1}~.
\]
Then, for $A \subset \mathcal E_\delta$, the $\ell_1$-norm of the
columns $A_i$ is 
\[
	\E[D]{\|A_i\|_1} \le  \sqrt{1-\delta^2}
		+ \delta \sqrt{K-1}~.
\]
Basic computations permit to show that
\[
	\frac{\E[D]{\|A\|_{1,1}} -I_K}{K}  \le
		\delta \sqrt{K-1} - \frac{\delta^2}{2} +
		\underset{\delta\to 0}{\bO[\delta\to 0]{\delta^4}}~.
\] 
	
\end{proof}

%
%
%

\subsection{Accelerating sparse coding resolution}
\label{sec:certificate}

The two previous results permits to control the upper bound of the cost update defined
in \autoref{prop:cost_update} for generic dictionaries. It si interesting to see when this
upper bound become smaller than the upper bound obtained using the identity $\I$.

\begin{restatable}[Acceleration certificate]{theorem}{certifAcc}
	\label{thm:certif}
	Given a generic dictionary $D$, 
	it is possible find a diagonally dominant matrix $A\subset\mathcal E_\delta$,
	which provide better performance than identity to solve \autoref{eq:sparsecoding} when
	\[
		\lambda\left(\|z\|_1+\|z^*\|\right)
			\le \sqrt{\frac{K(K-1)}{p}}\|z_k-z^*\|_2^2
	\]
\end{restatable}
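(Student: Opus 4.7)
The proof plan is to combine the two previously-stated lemmas with the cost-update bound of \autoref{prop:cost_update} in order to derive a sufficient condition under which replacing the identity by a suitable $A \subset \mathcal{E}_\delta$ yields a strict improvement over the baseline.

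First I would specialize \autoref{prop:cost_update} to the choice $A \subset \mathcal{E}_\delta$ with $S$ defined by \autoref{eq:diagS}: the per-step bound reads
\[
F(z_{k+1}) - F(z^*) \;\leq\; \tfrac{1}{2}\,\|R_A\|\,\|z_k - z^*\|_2^2 \;+\; \delta_A(z^*) - \delta_A(z_{k+1})~.
\]
The same bound applied at $A = \mathbf{I}$, corresponding to the limit $\delta \to 0$, gives the identity baseline $\tfrac{1}{2}\,\|R_I\|\,\|z_k - z^*\|_2^2$ since $\delta_I \equiv 0$. Acceleration therefore reduces to requiring that the reduction in the quadratic residual outweighs the $\ell_1$-commutation cost introduced by $A$, \ie
\[
\bigl(\|R_I\| - \|R_A\|\bigr)\,\|z_k - z^*\|_2^2 \;\geq\; 2\bigl(\delta_A(z^*) - \delta_A(z_{k+1})\bigr)~.
\]

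Next I would lower-bound the reduction in $\|R\|$ using \autoref{lem:control_Rf}. Since $\|R\| \leq \|R\|_F$, and combining Jensen's inequality with a first-order Taylor expansion of the square root, the lemma produces, at leading order in $\delta$,
\[
\mathbb{E}\|R_I\|_F - \mathbb{E}\|R_A\|_F \;\gtrsim\; 2\delta\sqrt{K-1} + O(\delta^2)~,
\]
with baseline value $\mathbb{E}\|R_I\|_F \leq \sqrt{K(K-1)/p}$. Symmetrically, \autoref{lem:da} controls the $\ell_1$-commutation term: applied to both $z^*$ and $z_{k+1}$ (and using the approximate symmetry of $\mathcal{E}_\delta$ around $\mathbf{I}$ to absorb the sign of $\delta_A(z_{k+1})$), one obtains
\[
\bigl|\delta_A(z^*) - \delta_A(z_{k+1})\bigr| \;\leq\; \lambda\delta\sqrt{K-1}\bigl(\|z^*\|_1 + \|z_{k+1}\|_1\bigr) + O(\delta^2)~.
\]
Substituting both estimates into the acceleration inequality, the shared factor $\delta\sqrt{K-1}$ cancels; after restoring the $\sqrt{K(K-1)/p}$ scale of the Frobenius baseline and optimizing the trade-off in $\delta$, the remaining condition is exactly the certificate of the theorem.

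The main obstacle I anticipate lies in translating the expected Frobenius bound of \autoref{lem:control_Rf} into a usable statement about the spectral norm $\|R_A\|$ for a single realization: Jensen's inequality loses a factor, and a Taylor expansion of $\sqrt{\cdot}$ around $K(K-1)/p$ must be tracked carefully to guarantee that the first-order gain $2\delta\sqrt{K-1}$ survives. A secondary subtlety concerns \autoref{lem:da}, which bounds $\mathbb{E}\delta_A(z)$ from above but not $\mathbb{E}|\delta_A(z)|$; handling the $-\delta_A(z_{k+1})$ term symmetrically requires applying the same estimate to $A^{-1}$, which lies in a comparable perturbation ball of $\mathbf{I}$. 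Once these points are resolved, choosing $\delta$ proportional to the slack in the inequality yields the stated certificate relating $\lambda(\|z\|_1 + \|z^*\|)$ to $\sqrt{K(K-1)/p}\,\|z_k - z^*\|_2^2$.
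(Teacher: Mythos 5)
Your proposal follows essentially the same route as the paper: plug the two lemmas (\autoref{lem:control_Rf} and \autoref{lem:da}) into the per-step bound of \autoref{prop:cost_update}, identify the coefficient of $\delta$, and require it to be nonpositive so that a small $\delta>0$ strictly improves over $A=\mathbf{I}$. The two obstacles you flag are genuine but are not in fact resolved by the paper either: the paper's intermediate proposition bounds $\|A^{-1}SA-B\|_F^2$ and then substitutes it into \autoref{prop:cost_update} where $\|R\|$ (spectral norm) appears, without converting between the two, and it applies the one-sided bound of \autoref{lem:da} to control both $\delta_A(z^*)$ and $-\delta_A(z_{k+1})$ (hence the $\lambda(\|z\|_1+\|z^*\|)$ term) without arguing the symmetric case. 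Your plan is therefore faithful to the paper's argument, and more candid about where it is loose.
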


\begin{proof}{\textbf{sketch for \autoref{thm:certif}.}}
	{\small (The full proof can be found \autoref{sub:certif})}\\
	For $A \subset \mathcal E_\delta$ with columns chosen greedily in
	$\mathcal E_{\delta,i}$, using results from
	\autoref{lem:control_Rf} and \autoref{lem:da},
	\begin{equation}
	\begin{aligned}
	\mathcal E_D \left[\min_{A\subset \mathcal E_\delta}\left\|A^{-1}SA - B\right\|_F^2\right.&\|v\|_2^2
		+ \lambda \delta_A(z)\bigg]
	\le &\\ 
	\frac{(K-1)K}{p} \|v\|_2^2
		&+ \delta\sqrt{K-1} \left(\lambda\|z\|_1 -
				\sqrt{\frac{K(K-1)}{p}}\|v\|_2^2 \right)
		+\bO[\delta\to 0]{\delta^2}
	\end{aligned}	
\end{equation}
	Starting from \autoref{prop:cost_update}, and using the results from 
	 as
	\begin{align*}
		\E[D]{F(z_{k+1}) - F(z^*)} \leq&
		\frac{(K-1)K}{p} \|z_k-z^*\|_2^2\\
		&+ \delta\sqrt{K-1} \underbrace{\left(\lambda\left(\|z\|_1+\|z^*\|\right) -
				\sqrt{\frac{K(K-1)}{p}}\|z_k-z^*\|_2^2 \right)}_{\le 0}
		+\bO[\delta\to 0]{\delta^2}
	\end{align*}
\end{proof}

\begin{figure}[t]
\centering
\includegraphics[width=.7\textwidth]{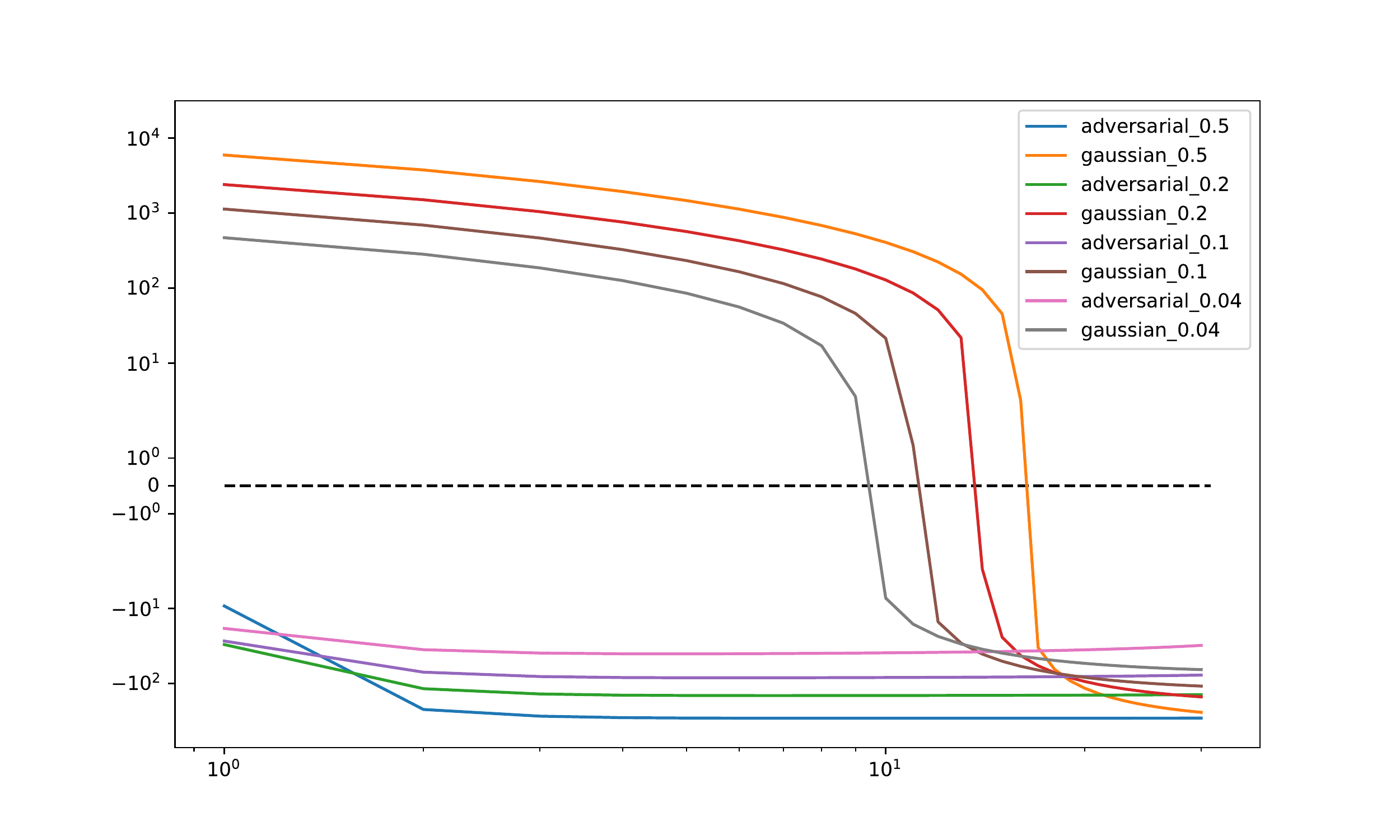}
\caption{Evolution of the gap condition with the
iteration of ISTA on very small problems for an
adversarial dictionary and a gaussian dictionary.}
\label{fig:gap}
\end{figure}

\paragraph{Gap between $\I$ and $A^*$ }
\label{par:gap}

Let $f(A) \overset{\Delta}{=} \frac{1}{2} v\tran
(ABA\tran - S) v + \lambda \delta_A(z)$ be the
function we are trying to minimize. If
\begin{equation}
	\label{eq:cond_gap}
	\lambda\|z\|_1
		\le \sqrt{\frac{K(K-1)}{p}}\|v\|_2^2
\end{equation}
Then for $\delta$ close to $0$, we have $\E[D]{f(A)}
\le \E[D]{f(\I)} = 0$ and there is a gap to improve
the factorization.

\paragraph{Remark 1}
\label{par:remark_I}
This does not depend on $D$ as it is a result in
expectation. But this means that there is a gap for
most of the matrices $D$. 

\paragraph{Remark 2}
\label{par:remark_II}
This gap seems to be consistent with our adversarial
dictionary. See the differences in \autoref{fig:gap}
between a Gaussian dictionary and a worst case,
adversarial one.


\section{Numerical Experiments}
\label{sec:exp}

This section provides numerical arguments to analyse adaptive optimization algorithms and their performances, 
and relates them to the theoretical properties developed in the previous section.
All the experiments were run using Python and Tensorflow.
For all the experiments, the training is performed using Adagrad \citep{duchi2011adaptive}.
The code to reproduce the figures is available online\footnote{The code can be found at ~\url{https://github.com/tomMoral/AdaptiveOptim}}.

\subsection{Adaptive Optimization Networks Architectures}

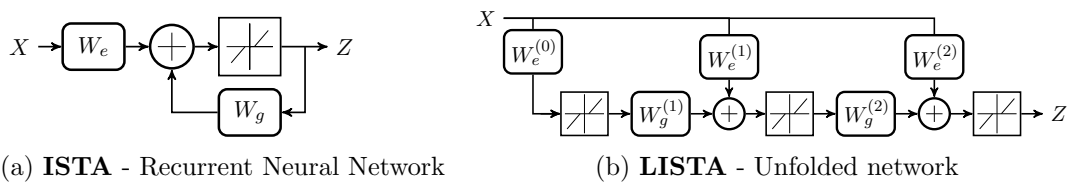
\begin{figure}[t]
\centering
\begin{subfigure}[b]{.4\textwidth}
\scalebox{.8}{

\begin{tikzpicture}[scale=\tkzscl]

\tikzset{
    >=stealth',
    varstyle/.style={
           rectangle,
           fill=white,
           rounded corners,
           draw=black, very thick,
           text width=2em,
           minimum height=2em,
           text centered},
    op/.style={
           circle,
           draw=black, very thick,
           fill=white,
           text width=1em,
           minimum height=1em,
           text centered},
    st/.style={
           draw,
           thick,
           rectangle, 
           fill=white,
           text width=2em, 
           minimum height=2.5em,},
    pil/.style={
           ->,
           thick,
    }
};

\node[varstyle] (linear) {$W_e$};
\node[op, inner sep=4pt, right=1em of linear] (add) {};
\node[left=1em of linear] (X) {$X$} edge[pil] (linear.west);
\path (linear.east) edge[pil] (add.west);


\draw ($ (add.north) - (0, .15)$) -- ($ (add.south) + (0, .15)$);
\draw ($ (add.east) - (.15, 0)$) -- ($ (add.west) + (.15, 0)$);

\node[st, right=1em of add] (h) {};
\node[above=1em of h.center] (up) {};
\node[below=1em of h.center] (down) {};
\node[right=1em of h.center] (right) {};
\node[left=1em of h.center] (left) {};
\draw (h.center) -- (up) -- (down) -- (h.center) -- (right) -- (left);
\draw ($ (h.center) - (.2em,0em)  $) -- ($ (h.center) - (.8em,.7em)  $);
\draw ($ (h.center) + (.2em,0em)  $) -- ($ (h.center) + (.8em,.7em)  $);
\node[right=2em of h] (Z) {$Z$};
\path (add.east)
edge[pil] (h.west);
\path (h.east)
edge[pil] (Z);
\node[inner sep=0,minimum size=0,right=1em of h] (branch) {};
\node[varstyle,below=.5em of h] (S) {$W_g$};
\draw[pil] (branch) |- (S.east);
\draw[pil] (S.west) -| (add);
\end{tikzpicture}
}
\caption{\textbf{ISTA} - Recurrent Neural Network }
\label{fig:ista}
\end{subfigure}
\begin{subfigure}[b]{.55\textwidth}
\scalebox{.75}{

\begin{tikzpicture}[scale=\tkzscl]

\tikzset{
    >=stealth',
    varstyle/.style={
           rectangle,
           fill=white,
           rounded corners,
           draw=black, very thick,
           text width=2em,
           minimum height=2em,
           text centered},
    op/.style={
           circle,
           draw=black, very thick,
           fill=white,
           text width=.5em,
           minimum height=.3em,
           text centered},
    st/.style={
           draw,
           thick,
           rectangle, 
           fill=white,
           text width=1.5em, 
           minimum height=2em,},
    pil/.style={
           ->,
           thick,
    }
};

\def\nlayer{2}

\node (X) {$X$};
\node[right=1em of X] (linear) {}; 
\node[below=3.7em of linear] (p) {};
\node[varstyle, above=1.45em of p] (B) {$W_e^{(0)}$};

\foreach \i in {1,...,\nlayer}{
    \node[st, right=.9em of p] (sta\i) {};
    \path (p.east) edge[pil] (sta\i.west);

    \node[varstyle, right=1em of sta\i] (S) {$W_g^{(\i)}$};
    \node[op, inner sep=4pt, right=1em of S] (p) {};
    \node[varstyle, above=.8em of p] (B\i) {$W_e^{(\i)}$};
    
    \draw ($ (p.north) - (0, .15)$) -- ($ (p.south) + (0, .15)$);
    \draw ($ (p.east) - (.15, 0)$) -- ($ (p.west) + (.15, 0)$);
    
    \draw ($ (sta\i.center) - (0,.9em)  $) -- ($ (sta\i.center) + (0,.9em)  $);
    \draw ($ (sta\i.center) - (.9em,0em)  $) -- ($ (sta\i.center) + (.9em,0em)  $);
    \draw ($ (sta\i.center) - (.2em,0em)  $) -- ($ (sta\i.center) - (.8em,.7em)  $);
    \draw ($ (sta\i.center) + (.2em,0em)  $) -- ($ (sta\i.center) + (.8em,.7em)  $);
    \path (sta\i.east) edge[pil] (S.west);
    \path (S.east) edge[pil] (p.west);
    \draw[pil] (X) -| (B\i) -- (p.north);
}
\draw[pil] (linear.center) -- (B) |- (sta1.west);

\node[st, right=1em of p] (sta) {};
    \draw ($ (sta.center) - (0,.9em)  $) -- ($ (sta.center) + (0,.9em)  $);
    \draw ($ (sta.center) - (.9em,0em)  $) -- ($ (sta.center) + (.9em,0em)  $);
    \draw ($ (sta.center) - (.2em,0em)  $) -- ($ (sta.center) - (.8em,.7em)  $);
    \draw ($ (sta.center) + (.2em,0em)  $) -- ($ (sta.center) + (.8em,.7em)  $);
\node[right=1em of sta] (Z) {$Z$};
\path (p.east) edge[pil] (sta.west);
\path (sta.east) edge[pil] (Z.west);

\end{tikzpicture}
}
\caption{\textbf{LISTA} - Unfolded network}
\label{fig:lista}
\end{subfigure}
\caption{
Network architecture for ISTA/LISTA.
The unfolded version (b) is trainable through backpropagation and permits to approximate the sparse coding solution efficiently.}
\end{figure}

\paragraph{LISTA/LFISTA}
\label{par:lista}
In \cite{Gregor10}, the authors introduced LISTA, a neural network constructed by considering ISTA as a recurrent neural net.
At each step, ISTA performs  the following 2-step procedure :
\begin{equation}
 \left.\begin{aligned}
       1.\hspace{1em} u_{k+1} &= z_k - \displaystyle\frac{1}{L} D\tran(Dz_k-x) = \underbrace{({\bf I} - \frac{1}{L}D\tran D)}_{W_g} z_k + \underbrace{\frac{1}{L}D\tran}_{W_e}x~,\\
       2.\hspace{1em} z_{k+1} &= h_{\frac{\lambda}{L}}(u_{k+1}) \text{ where } h_\theta(u) = \text{sign}(u)(\lvert u\rvert-\theta)_+~,\\
       \end{aligned}
 \right\}
 \quad \text{step $k$ of ISTA}
 \label{eq:ista}
\end{equation}
This procedure combines a linear operation to compute $u_{k+1}$ with an element-wise non linearity.
It can be summarized as a recurrent neural network, presented in \mbox{\autoref{fig:ista}.}, with tied weights.
The autors in \cite{Gregor10} considered the architecture $\Phi_\Theta^K$ with parameters $\Theta = (W_g^{(k)}, W_e^{(k)}, \theta^{(k)})_{k=1,\dots K}$ obtained by unfolding $K$ times the recurrent network, as presented in \autoref{fig:lista}.
The layers $\phi_\Theta^k$ are defined as
\begin{equation}
z_{k+1} = \phi_\Theta^k(z_k) := h_{\theta}( W_g z_k + W_e x)~.
\end{equation}
If $ W_g^{(k)} = {\bf I} - \frac{D\tran D}{L}$, $ W_e^{(k)} = \frac{D\tran}{L}$ and $ \theta^{(k)} = \frac{\lambda}{L}$ are fixed for all the $K$ layers, the output of this neural net is exactly the vector $z_K$ resulting from $K$ steps of ISTA.
With LISTA, the parameters $\Theta$ are learned using back propagation to minimize the cost function: \mbox{$f(\Theta) = \mathbb E_x\left[F_x(\Phi^K_{\Theta}(x))\right]~.$}

A similar algorithm can be derived from FISTA, the accelerated version of ISTA to obtain LFISTA (see \autoref{fig:lfista} in \autoref{sec:lfista}~).
The architecture is very similar to LISTA, now with two memory tapes:
\[z_{k+1} = h_{\theta}( W_g z_k + W_m z_{k-1} + W_e x)~.\]

\paragraph{Factorization network}
\label{par:facnet}

Our analysis in \autoref{sec:math} suggests a refactorization of LISTA in more a structured class of parameters.
Following the same basic architecture, and using \autoref{eq:prox}, the network FacNet, $\Psi_\Theta^K$ is formed using layers such that:
\begin{equation}
	z_{k+1} = \psi_\Theta^k(z_k) := A\tran h_{\lambda S^{-1}} (A z_{k} - S^{-1}A(D\tran D z_{k} - D\tran x) )~,
	\label{eq:facnet}
\end{equation}
with $S$ diagonal and $A$ unitary, the parameters of the $k$-th layer.
The parameters obtained after training such a network with back-propagation can be used with the theory developed in \autoref{sec:math}.
Up to the last linear operation $A\tran$  of the network, this network is a re-parametrization of LISTA in a more constrained parameter space.
Thus, LISTA is a generalization of this proposed network and should have performances at least as good as FacNet, for a fixed number of layers.

The optimization can also be performed using backpropagation.
To enforce the unitary constraints on $A^{(k)}$, the cost function is modified with a penalty: 
\begin{equation}
\label{eq:facnet}
f(\Theta) = \mathbb E_x\left[F_x(\Psi^K_{\Theta}(x))\right] + \frac{\mu}{K}\sum_{k=1}^K\left\|{\bf I} - \left(A^{(k)}\right)^ TA^{(k)}\right\|^2_2~,
\end{equation}
with $\Theta = (A^{(k)}, S^{(k)})_{k=1\dots K}$ the parameters of the K layers and $\mu$ a scaling factor for the regularization.
The resulting matrix $A^{(k)}$ is then projected on the Stiefel Manifold using a SVD to obtain final parameters, coherent with the network structure.

\paragraph{Linear model}
\label{par:linear}

Finally, it is important to distinguish the performance gain resulting from choosing a suitable starting point and the acceleration from our model.
To highlights the gain obtain by changing the starting point, we considered a linear model with one layer such that $z_{out} = A^{(0)} x$.
This model is learned using SGD with the convex cost function \mbox{$ f(A^{(0)}) = \|({\bf I} - DA^{(0)})x\|_2^2+\lambda \|A^{(0)}x\|_1~$}.
It computes a tradeoff between starting from the sparsest point $\bf 0$ and a point with minimal reconstruction error $y~$.
Then, we observe the performance of the classical iteration of ISTA using $z_{out}$ as a stating point instead of $\bf 0~$.


\subsection{Synthetic problems with known distributions}
\label{sub:gen}
\begin{figure}[t]
\centering
    \begin{subfigure}[t]{0.48\textwidth}
		\includegraphics[width=\textwidth]{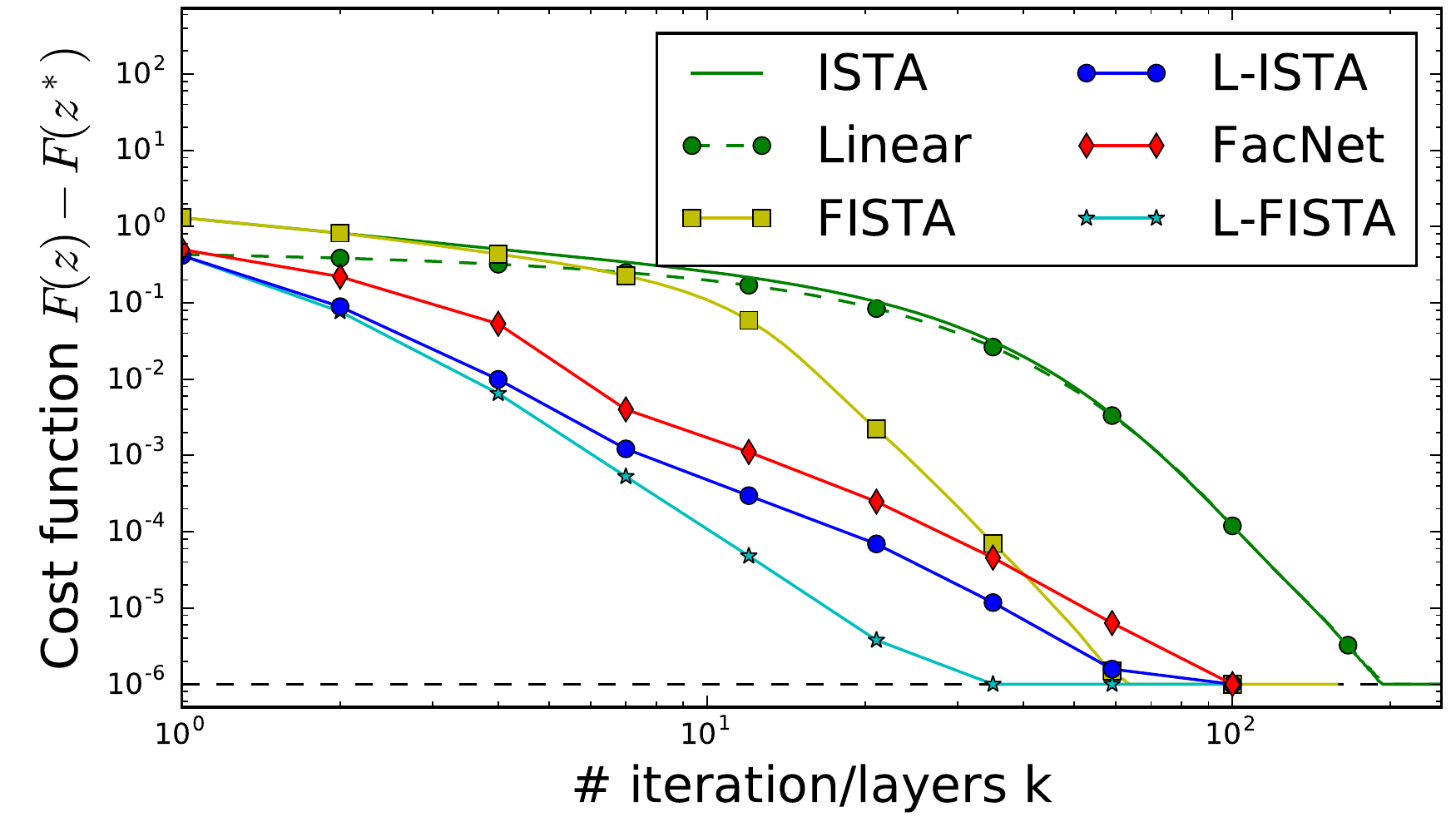}
		\label{fig:layers1}
    \end{subfigure}
    \begin{subfigure}[t]{0.48\textwidth}
        \includegraphics[width=\textwidth]{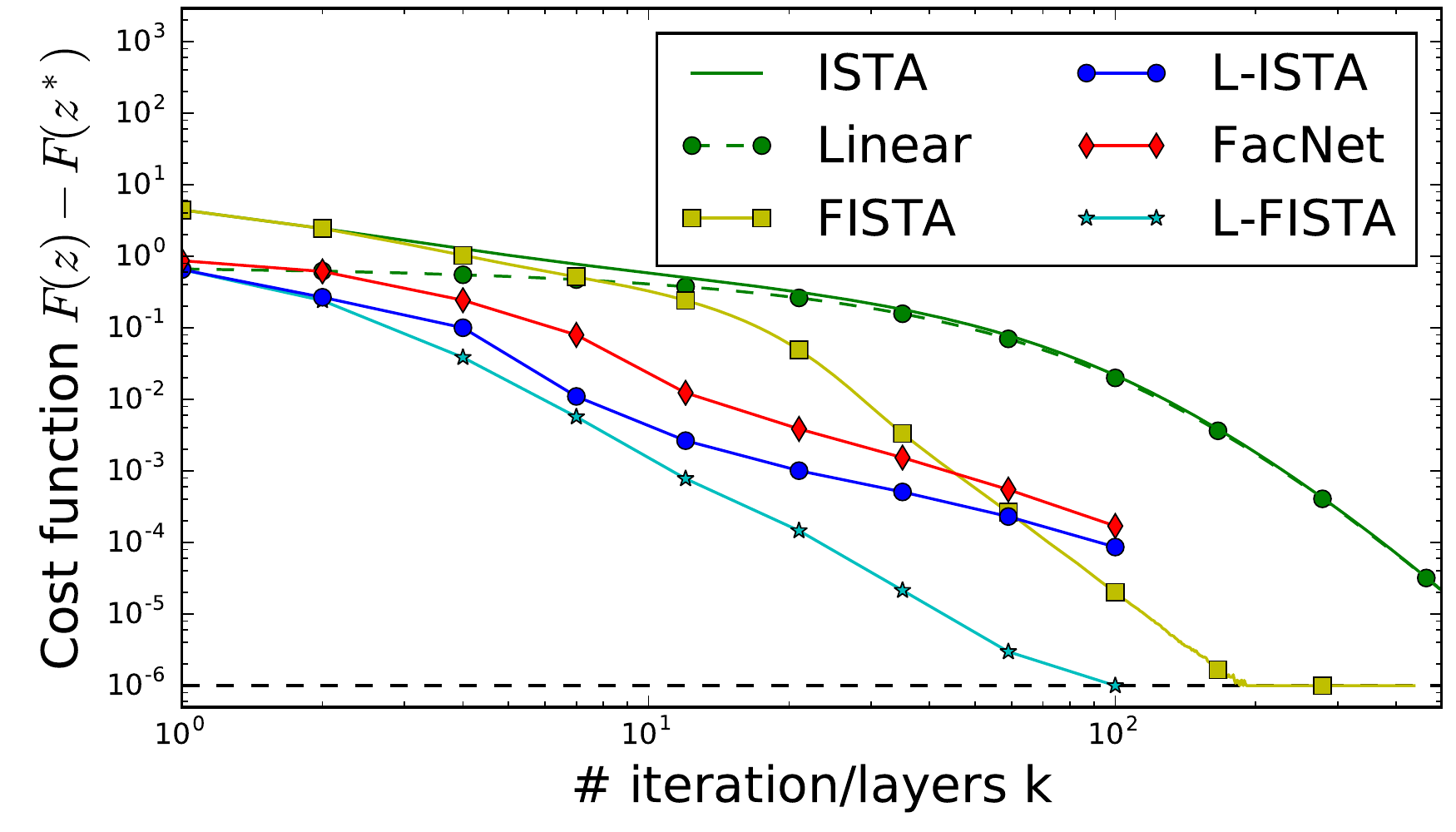}
        \label{fig:layers2}
    \end{subfigure}
    \vspace{-1.3em}
    \caption{
    Evolution of the cost function $F(z_k)-F(z^*)$ with the number of layers or the number of iteration $k$ for different sparsity level.
    \emph{(left)} $\rho=\nicefrac{1}{20}$ and \emph{(right)}$\rho=\nicefrac{1}{4}~.$
    }
    \label{fig:layers}
\end{figure}


\textbf{Gaussian dictionary}
In order to disentangle the role of dictionary structure from the role of data distribution structure, the minimization problem is tested using a synthetic generative model with no structure in the weights distribution.
First, $m$ atoms $d_i \in \R^n$ are drawn \emph{iid} from a multivariate Gaussian with mean $\textbf{0}$ and covariance \textbf I$_n$ and the dictionary $D$ is defined as
$\left(\nicefrac{d_i}{\|d_i\|_2}\right)_{i=1\dots m}~.$
The data points are generated from its sparse codes following a Bernoulli-Gaussian model.
The coefficients $z=(z_1, \dots, z_m)$ are constructed with $z_i = b_i a_i$, where $b_i\sim \mathcal B(\rho)$ and $a_i \sim \mathcal N(0, \sigma \textbf I_m)~,$ where $\rho$ controls the sparsity of the data.
The values are set to $ \smeq m=100,~\smeq n = 64$ for the dictionary dimension, $ \rho = \nicefrac{5}{m}$ for the sparsity level and $\smeq\sigma = 10$ for the activation coefficient generation parameters.
The sparsity regularization is set to $\smeq\lambda = 0.01$.
The batches used for the training are generated with the model at each step and the cost function is evaluated over a fixed test set, not used in the training.

\autoref{fig:layers} displays the cost performance for methods ISTA/FISTA/Linear relatively to their iterations and for methods LISTA/LFISTA/FacNet relatively to the number of layers used to solve our generated problem.
Linear has performances comparable to learned methods with the first iteration but a gap appears as the number of layers increases, until a point where it achieves the same performances as non adaptive methods.
This highlights that the adaptation is possible in the subsequent layers of the networks, going farther than choosing a suitable starting point for iterative methods.
The first layers permit to achieve a large gain over the classical optimization strategy, by leveraging the structure of the problem.
This appears even with no structure in the sparsity patterns of input data, in accordance with the results in the previous section.
We also observe diminishing returns as the number of layers increases.
This results from the phase transition described in \autoref{sub:phase}, as the last layers behave as ISTA steps and do not speed up the convergence.
The 3 learned algorithms are always performing at least as well as their classical counterpart, as it was stated in \autoref{thm:conv_rate}.
We also explored the effect of the sparsity level in the training and learning of adaptive networks.
In the denser setting, the arbitrage between the $\ell_1$-norm and the squared error is easier as the solution has a lot of non zero coefficients.
Thus in this setting, the approximate method is more precise than in the very sparse setting where the approximation must perform a fine selection of the coefficients.
But it also yield lower gain at the beggining as the sparser solution can move faster.

There is a small gap between LISTA and FacNet in this setup.
This can be explained from the extra constraints on the weights that we impose in the FacNet, which effectively reduce the parameter space by half.
Also, we implement the unitary constraints on the matrix $A$  by a soft regularization (see \autoref{eq:facnet}), involving an extra hyper-parameter $\mu$  that also contributes to the small performance gap.
In any case, these experiments show that our analysis accounts for most of the acceleration provided by LISTA, as the performance of both methods are similar, up to optimization errors.

\begin{figure}[t]
\centering
	\begin{subfigure}[t]{0.48\textwidth}
		\centering
		\includegraphics[width=\textwidth]{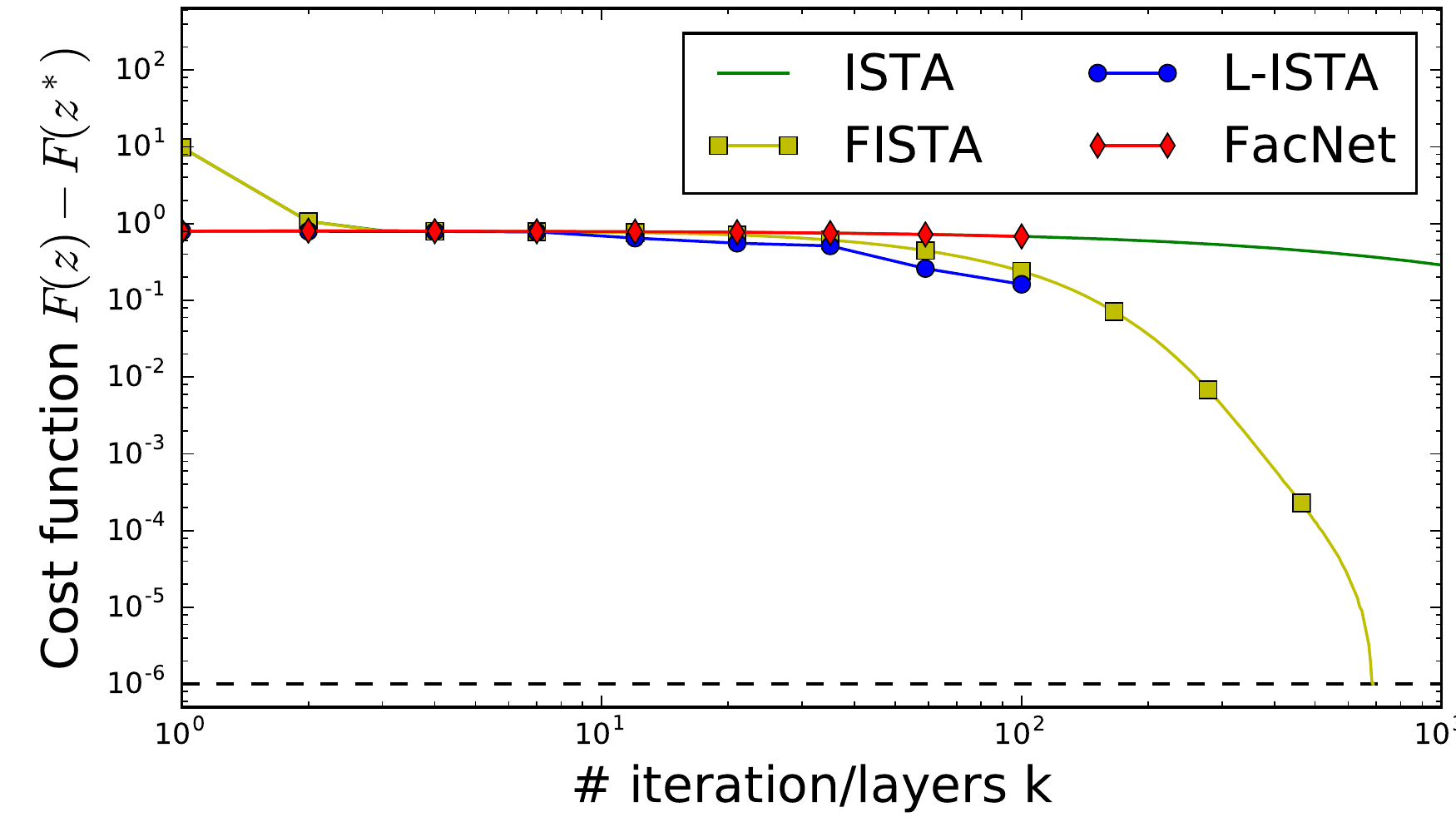}
    \end{subfigure}
	\begin{subfigure}[t]{0.48\textwidth}
		\centering
		\includegraphics[width=\textwidth]{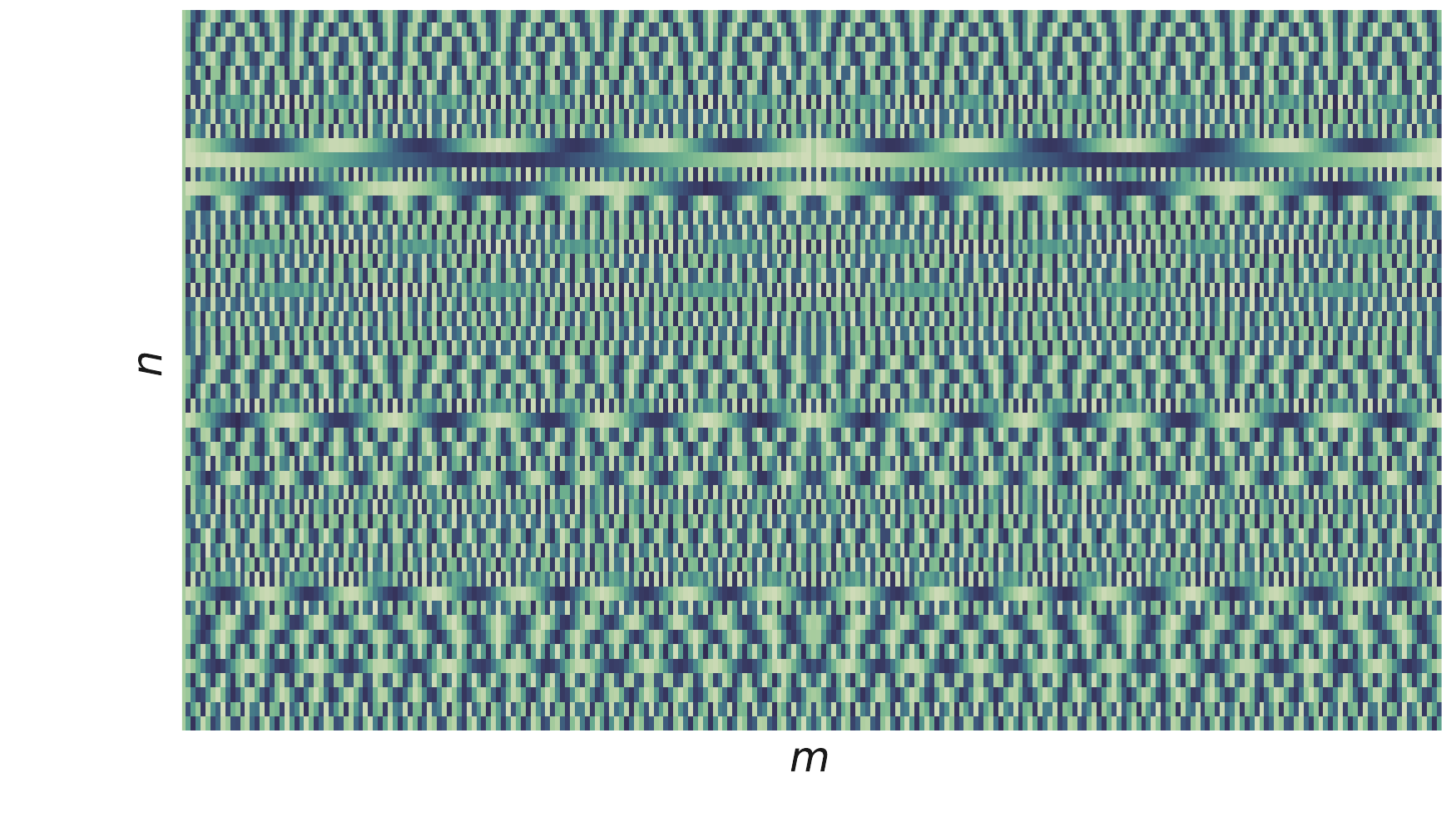}
		
    \end{subfigure}
    \caption{Evolution of the cost function $F(z_k)-F(z^*)$ with the number of layers or the number of iteration $k$ for a problem generated with an adversarial dictionary.}
    \label{fig:adverse}
\end{figure}

\textbf{Adversarial dictionary}
The results from \autoref{sec:math} show that problems with a gram matrix composed of large eigenvalues associated to non sparse eigenvectors are harder to accelerate.
Indeed, it is not possible in this case to find a quasi diagonalization of the matrix $B$ that does not distort the $\ell_1$ norm.
It is possible to generate such a dictionary using Harmonic Analysis.
The Discrete Fourier Transform (DFT) distorts a lot the $\ell_1$ ball, since a very sparse vector in the temporal space is transformed in widely spread spectrum in the Fourier domain.
We can thus design a dictionary for which LISTA and FacNet performances should be degraded.
$ D = \left(\nicefrac{d_i}{\|d_i\|_2}\right)_{i=1\dots m}~$ is constructed such that $ d_{j,k} = e^{-2\pi i j \zeta_k}$, with $\left(\zeta_k\right)_{k \le n}$ randomly selected from $\left\{\nicefrac{1}{m},\dots, \nicefrac{\nicefrac{m}{2}}{m}\right\}$ without replacement.

The resulting performances are reported in \autoref{fig:adverse}.
The first layer provides a big gain by changing the starting point of the iterative methods.
It realizes an arbitrage of the tradeoff between starting from $\bf 0$ and starting from $y~.$
But the next layers do not yield any extra gain compared to the original ISTA algorithm.
After $4$ layers, the cost performance of both adaptive methods and ISTA are equivalent.
It is clear that in this case, FacNet does not accelerate efficiently the sparse coding, in accordance with our result from \autoref{sec:math}.
LISTA also displays poor performances in this setting.
This provides further evidence that FacNet and LISTA share the same acceleration mechanism as adversarial dictionaries for FacNet are also adversarial for LISTA.


 

%
%


%


\subsection{Sparse coding with over complete dictionary on images}
\label{sub:img}



\textbf{Wavelet encoding for natural images}
\label{par:wavelet}
A highly structured dictionary composed of translation invariant Haar wavelets is used to encode 8x8 patches of images from the PASCAL VOC 2008 dataset.
The network is used to learn an efficient sparse coder for natural images over this family.
$500$ images are sampled from dataset to train the encoder.
Training batches are obtained by uniformly sampling patches from the training image set to feed the stochastic optimization of the network.
The encoder is then tested with $10000$ patches sampled from $100$ new images from the same dataset.


\textbf{Learned dictionary for MNIST}
To evaluate the performance of LISTA for dictionary learning, LISTA was used to encode MNIST images over an unconstrained dictionary, learned {\it a priori} using classical dictionary learning techniques.
The dictionary of $100$ atoms was learned from $10000$ MNIST images in grayscale rescaled to 17x17 using the implementation of \cite{Mairal2009a} proposed in scikit-learn, with $\lambda = 0.05$.
Then, the networks were trained through backpropagation using all the $60000$ images from the training set of MNIST.
Finally, the perfornance of these encoders were evaluated with the $10000$ images of the training set of MNIST.

\begin{figure}[t]
\centering
    \begin{subfigure}[t]{0.48\textwidth}
		\includegraphics[width=\textwidth]{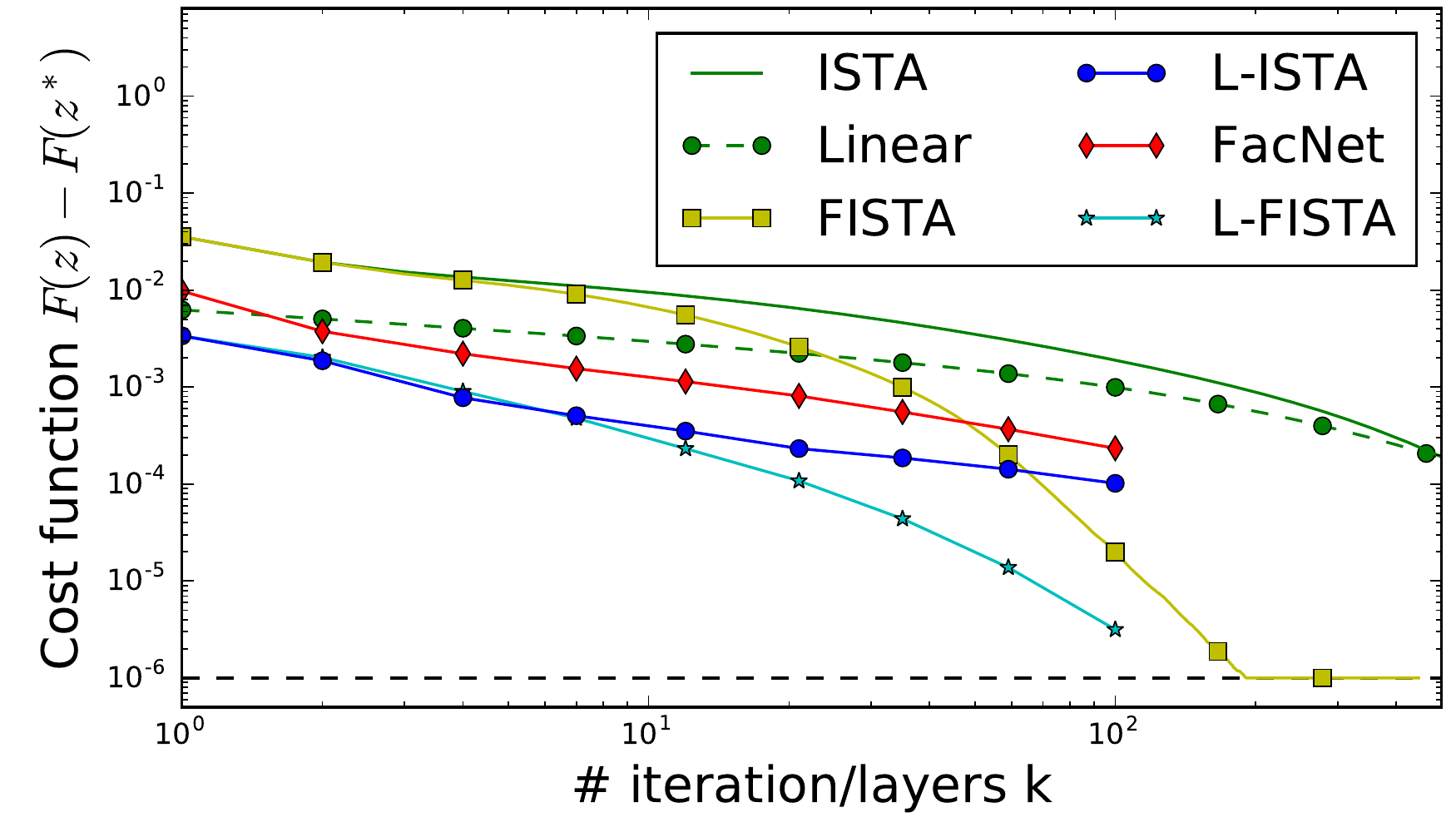}
		\caption{Pascal VOC 2008}
		\label{fig:pascal}
    \end{subfigure}
    \begin{subfigure}[t]{0.48\textwidth}
        \includegraphics[width=\textwidth]{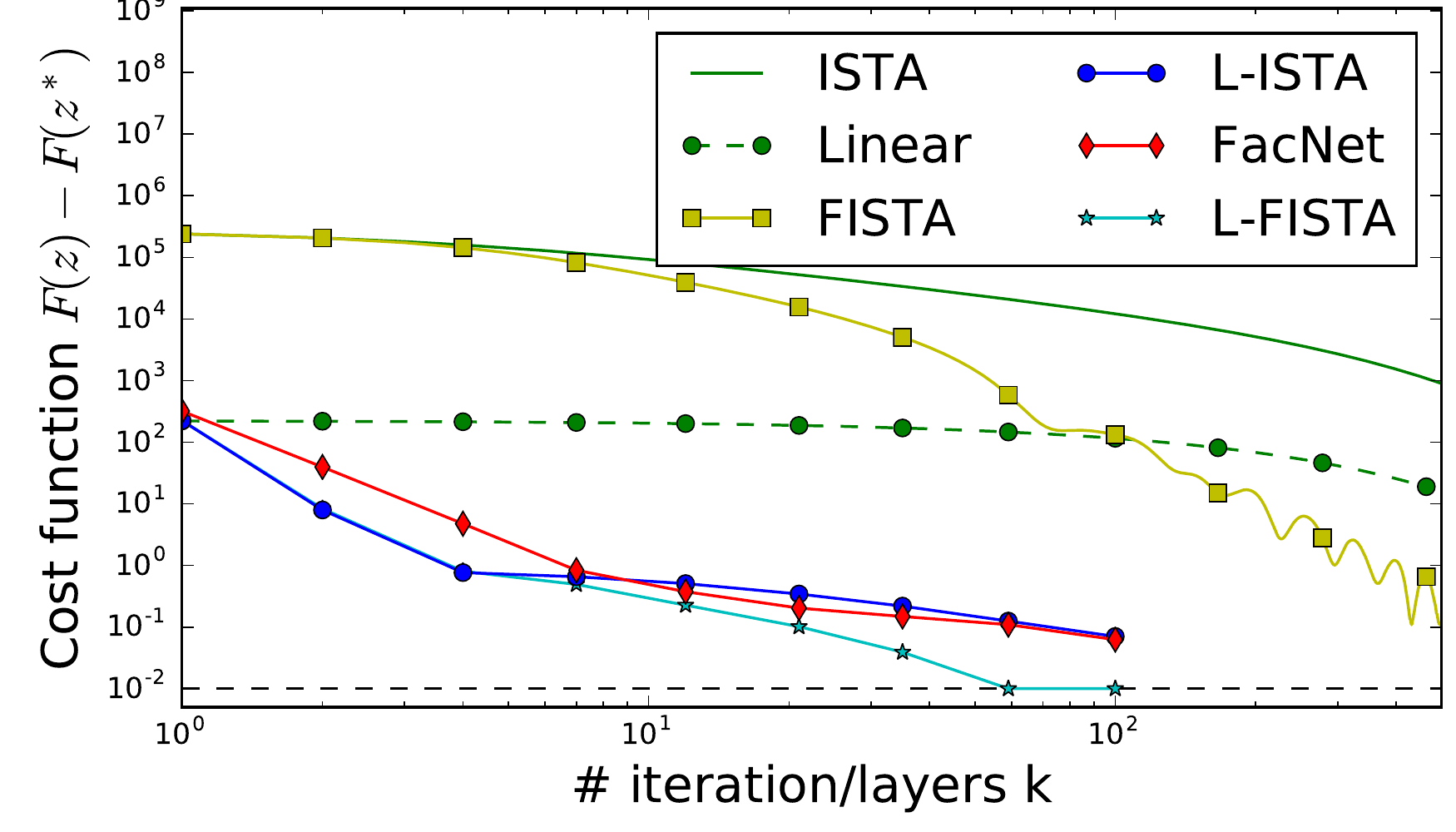}
        \caption{MNIST}
        \label{fig:mnist}
    \end{subfigure}
    \vspace{-.5em}
    \caption{
    Evolution of the cost function $F(z_k)-F(z^*)$ with the number of layers or the number of iteration $k$ for two image datasets.}
    \label{fig:images}
\end{figure}

The \autoref{fig:images} displays the cost performance of the adaptive procedures compared to non-adaptive algorithms.
In both scenario, FacNet has performances comparable to the one of LISTA and their behavior are in accordance with the theory developed in \autoref{sec:math}.
The gains become smaller for each added layer and the initial gain is achieved for dictionary either structured or unstructured.
The MNIST case presents a much larger gain compare to the experiment with natural images.
This results from the difference of structure of the input distribution, as the MNIST digits are much more constrained than patches from natural images and the network is able to leverage it to find a better encoder.
In the MNIST case, a network composed of $12$ layers is sufficient to achieve performance comparable to ISTA with more than $1000$ iterations.



\section{Conclusions}
\label{sec:conclusions}


In this paper we studied the problem of finite computational budget approximation of sparse coding.
Inspired by the ability of neural networks to accelerate over splitting methods on the first few iterations, we have studied which properties of the dictionary matrix and the data distribution lead to such acceleration.
Our analysis reveals that one can obtain acceleration by finding approximate matrix factorizations of the dictionary which nearly diagonalize its Gram matrix, but whose orthogonal transformations leave approximately invariant the $\ell_1$ ball.
By appropriately balancing these two conditions, we show that the resulting rotated proximal splitting scheme has an upper bound which improves over the ISTA upper bound under appropriate sparsity.

In order to relate this specific factorization property to the actual LISTA algorithm, we have introduced a reparametrization of the neural network that specifically computes the factorization, and incidentally provides reduced learning complexity (less parameters) from the original LISTA.
Numerical experiments of \autoref{sec:exp} show that such reparametrization recovers the same gains as the original neural network, providing evidence that our theoretical analysis is partially explaining the behavior of the LISTA neural network.
Our acceleration scheme is inherently transient, in the sense that once the iterates are sufficiently close to the optimum, the factorization is not effective anymore.
This transient effect is also consistent with the performance observed numerically, although the possibility remains open to find alternative models that further exploit the particular structure of the sparse coding.
Finally, we provide evidence that successful matrix factorization is not only sufficient but also necessary for acceleration, by showing that Fourier dictionaries are not accelerated.

Despite these initial results, a lot remains to be understood on the general question of optimal tradeoffs between computational budget and statistical accuracy.
Our analysis so far did not take into account any probabilistic consideration (e.g. obtain approximations that hold with high probability or in expectation).
Another area of further study is the extension of our analysis to the FISTA case, and more generally to other inference tasks that are currently solved via iterative procedures compatible with neural network parametrizations, such as inference in Graphical Models using Belief Propagation or other ill-posed inverse problems.

\bibliographystyle{\Bibstyle}
\bibliography{\BibFile}

\appendix
\crossref{}
\appendix

\section{Learned Fista}
\label{sec:lfista}

A similar algorithm can be derived from FISTA, the accelerated version of ISTA to obtain LFISTA (see \autoref{fig:lfista}~).
The architecture is very similar to LISTA, now with two memory taps:
It introduces a momentum term to improve the convergence rate of ISTA as follows:
\begin{enumerate}
	\item $\displaystyle y_k = z_{k} + \frac{\smeq t_{k-1} - 1}{t_k}(z_k - z_{k-1})$~,
	\item $\displaystyle z_{k+1} = h_{\frac{\lambda}{L}}\left(y_k - \frac{1}{L}\nabla E(y_k)\right)= h_{\frac{\lambda}{L}}\left(({\bf I} - \frac{1}{L}B)y_k + \frac{1}{L}D\tran x\right)$~,
	\item $ \displaystyle t_{k+1} = \frac{1 + \sqrt{1 + 4t_k^2}}{2}$~.
\end{enumerate}
By substituting the expression for $y_k$ into the first equation, we obtain a generic recurrent architecture very similar to LISTA, now with two memory taps, that we denote by LFISTA:
\[z_{k+1} = h_{\theta}( W_g^{(k)} z_k + W_m^{(k)} z_{k-1} + W_e^{(k)} x)~.\]
This model is equivalent to running $K$-steps of FISTA when its parameters are initialized with
\begin{eqnarray}
	W_g^{(k)} & = & \left(1 + \frac{t_{k-1} - 1}{t_{k}}\right)\left({\bf I} - \frac{1}{L}B\right)~, \nonumber \\
	W_m^{(k)} & = & \left(\frac{1 - t_{k-1}}{t_{k}}\right)\left({\bf I} - \frac{1}{L}B\right)~, \nonumber \\
	W_e^{(k)} & = & \frac{1}{L}D\tran~. \nonumber
\end{eqnarray}
The parameters of this new architecture, presented in \autoref{fig:lfista}~, are trained analogously as in the LISTA case.

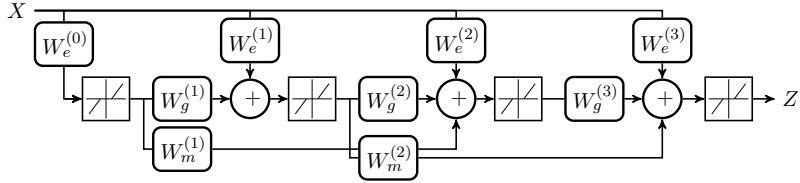
\begin{figure}[t]
\centering
\scalebox{.75}{

\begin{tikzpicture}[scale=\tkzscl]

\tikzset{
    >=stealth',
    varstyle/.style={
           rectangle,
           rounded corners,
           draw=black, very thick,
           fill=white,
           text width=2em,
           minimum height=2em,
           text centered},
    op/.style={
           circle,
           draw=black, very thick,
           fill=white,
           text width=.5em,
           minimum height=.3em,
           text centered},
    st/.style={
           draw,
           thick,
           rectangle, 
           fill=white,
           text width=1.5em, 
           minimum height=2em,},
    pil/.style={
           ->,
           thick,
    }
};

\def\nlayer{3}

\node (X) {$X$};
\node[right=1em of X] (linear) {};
\node[inner sep=0,minimum size=0,below=3.7em of linear] (p) {};
\node[varstyle, above=1.45em of p] (B) {$W_e^{(0)}$};
\foreach \i in {1,...,\nlayer}{
    \node[st, right=.8em of p] (sta\i) {};
    \path (p.east) edge[pil] (sta\i.west);

    \node[varstyle, right=1em of sta\i] (S) {$W_g^{(\i)}$};
    \node[op, right=.8em of S] (p) {$+$};
	"\ifthenelse{\i>1}{\draw[pil] (S2.east) -| (p.south);}{}"
    "\ifthenelse{\i<\nlayer}{\node[varstyle, below=(2*(\i-1)+1)*.2em of S] (S2) {$W_m^{(\i)}$};}{}";
    \node[inner sep=0, minimum size=0, right=.6em of sta\i] (branch) {};
    \node[varstyle, above=.7em of p] (B\i) {$W_e^{(\i)}$};
    
    \draw ($ (sta\i.center) - (0,.9em)  $) -- ($ (sta\i.center) + (0,.9em)  $);
    \draw ($ (sta\i.center) - (.35,0em)  $) -- ($ (sta\i.center) + (.35,0em)  $);
    \draw ($ (sta\i.center) - (.1,0em)  $) -- ($ (sta\i.center) - (.3,.7em)  $);
    \draw ($ (sta\i.center) + (.1,0em)  $) -- ($ (sta\i.center) + (.3,.7em)  $);
    \draw[pil] (sta\i.east) -- (S) -- (p.west);
    \draw[pil] (X) -| (B\i) -- (p.north);
	"\ifthenelse{\i<\nlayer}{\draw[thick] (branch) |- (S2.west);}{}"
}
\draw[pil] (linear.center) -- (B) |- (sta1.west);

\node[st, right=1em of p] (sta) {};
    \draw ($ (sta.center) - (0,.9em)  $) -- ($ (sta.center) + (0,.9em)  $);
    \draw ($ (sta.center) - (.35,0em)  $) -- ($ (sta.center) + (.35,0em)  $);
    \draw ($ (sta.center) - (.1,0em)  $) -- ($ (sta.center) - (.3,.7em)  $);
    \draw ($ (sta.center) + (.1,0em)  $) -- ($ (sta.center) + (.3,.7em)  $);
\node[right=1em of sta] (Z) {$Z$};
\path (p.east) edge[pil] (sta.west);
\path (sta.east) edge[pil] (Z.west);

\end{tikzpicture}
}
\caption{
Network architecture for LFISTA.
This network is trainable through backpropagation and permits to approximate the sparse coding solution efficiently.}
\label{fig:lfista}
\end{figure}

\section{Proofs for the convergence rate using LISTA}
\label{sec:proof_conv}

\begin{propositionA}
\label{prop:bound_error}
Suppose that $R = A\tran S A - B$ is positive definite, and define 
\begin{equation}
\label{eq:algo_a}
z_{k+1} = \arg\min_z \widetilde{F}(z, z_k)~,\text{ and }
\end{equation}
$\delta_A(z) = \|A z \|_1 - \|z \|_1$.
Then we have
\begin{equation}
\label{eq:bo2_a}
F(z_{k+1}) - F(z^*) \leq \frac{1}{2}  \left( (z^* -z_{k})\tran R(z^* -z_{k})  - (z^* -z_{k+1})\tran R(z^* -z_{k+1}) \right) + \langle \partial \delta_A(z_{k+1}) ,  z_{k+1} - z^* \rangle ~.
\end{equation}
\end{propositionA}

\begin{proof}
We define
$$f(t) = F\left( t z_{k+1} + (1-t) z^*\right)~,~t \in [0,1]~.$$
Since $F$ is convex, $f$ is also convex in $[0,1]$.
Since $f(0) = F(z^*)$ is the global minimum, 
it results that $f'(t)$ is increasing in $(0,1]$, and hence
\begin{equation*}
F(z_{k+1}) - F(z^*) = f(1) - f(0) = \int f'(t) dt \leq f'(1)~,
\end{equation*}
where $f'(1)$ is any element of  $\partial f(1)$.
Since $\delta_A(z)$ is a difference of convex functions, its subgradient 
can be defined as a limit of infimal convolutions \cite{Hiriart-Urruty1991}.
We have
$$\partial f(1) = \langle \partial F(z_{k+1}), z_{k+1} - z^* \rangle~,$$
and since
$$\partial F(z) = \partial \widetilde{F}(z,z_k) - R (z - z_k) - \partial \delta_A(z)~\text{ and}~0 \in \partial \widetilde{F}(z_{k+1}, z_k)$$
it results that
\begin{equation*}
\partial F(z_{k+1}) = -R (z_{k+1} - z_k) - \partial \delta_A(z_{k+1})~,
\end{equation*}
and thus
\begin{equation}
F(z_{k+1}) - F(z^*) \leq (z^*-z_{k+1})\tran R ( z_{k+1} - z_k) + \langle \partial \delta_A(z_{k+1}) , (z^*-z_{k+1}) \rangle~.
\end{equation}
\autoref{eq:bo2} is obtained by observing that
\begin{equation}
\label{eq:bt1_a}
(z^*-z_{k+1})\tran R ( z_{k+1} - z_k) \leq \frac{1}{2} \left( (z^* -z_{k})\tran R(z^* -z_{k})  - (z^* -z_{k+1})\tran R(z^* -z_{k+1}) \right) ~,
\end{equation}
thanks to the fact that $R \succ 0$.
\end{proof}

\restatethm{\convRate}{thm:conv_rate}

\begin{proof} The proof is adapted from \citep[Theorem 3.1]{Beck2009}.\\
From \autoref{prop:bound_error},
we start by using \autoref{eq:bo2_a} to bound terms of the form $F(z_n) - F(z^*)$:
{\small 
\begin{equation*}
F(z_n) - F(z^*) \leq \langle \partial\delta_{A_n}(z_{n+1}) , (z^*-z_{n+1}) \rangle + \frac{1}{2} \left( (z^* -z_{n})\tran R_n (z^* -z_{n})  - (z^* -z_{n+1})\tran R_n(z^* -z_{n+1}) \right)~.
\end{equation*}
}
Adding these inequalities for $n = 0 \dots k-1$ we obtain
\begin{eqnarray}
\label{eq:vg1_a}
\left(\sum_{n=0}^{k-1} F(z_n) \right) - k F(z^*) &\leq& \sum_{n=0}^{k-1}  \langle \partial \delta_{A_n}(z_{n+1}) , (z^*-z_{n+1}) \rangle + \\ \nonumber
&& + \frac{1}{2} \left( (z^*- z_0)\tran R_0 (z^* - z_0) - (z^*- z_k)\tran R_{k-1} (z^* - z_k) \right) + \\ \nonumber 
&& + \frac{1}{2} \sum_{n=1}^{k-1} (z^* - z_n)\tran ( R_{n-1} - R_{n}) (z^* - z_n)~.
\end{eqnarray}
On the other hand, we also have
\begin{eqnarray*}
\label{eq:vg2_a}
F(z_n) - F(z_{n+1}) &\geq& F(z_n) - \tilde{F}(z_n,z_n) + \tilde{F}(z_{n+1},z_n) - F(z_{n+1}) \\
&=& - \delta_{A_n}(z_n) + \delta_{A_n}(z_{n+1}) + \frac{1}{2}(z_{n+1}-z_n)\tran R_n (z_{n+1}-z_n)~,
\end{eqnarray*}
which results in 
\begin{eqnarray}
\label{eq:vg3_a}
\sum_{n=0}^{k-1} (n+1)(F(z_n) - F(z_{n+1})) &\geq& \frac{1}{2}\sum_{n=0}^{k-1}(n+1) (z_{n+1}-z_n)\tran R_n (z_{n+1}-z_n) + \\ \nonumber
&& + \sum_{n=0}^{k-1}(n+1) \left( \delta_{A_n}(z_{n+1}) - \delta_{A_n}(z_n) \right) \\ \nonumber 
\left(\sum_{n=0}^{k-1} F(z_n) \right) - k F(z_k) &\geq&  \sum_{n=0}^{k-1} (n+1)\left(\frac{1}{2}(z_{n+1}-z_n)\tran R_n (z_{n+1}-z_n) + \delta_{A_n}(z_{n+1}) - \delta_{A_n}(z_n) \right) ~.
\end{eqnarray}
Combining \autoref{eq:vg1_a} and \autoref{eq:vg3_a} we obtain
\begin{eqnarray}
F(z_k) - F(z^*) &\leq& \frac{(z^*- z_0)\tran R_0 (z^* - z_0) + 2\langle \nabla \delta_{A_0}(z_{1}) , (z^*-z_{1}) \rangle }{2k} + \frac{\alpha - \beta}{2k}\\ \nonumber
\end{eqnarray}
with
$$ \alpha = \sum_{n=1}^{k-1} \left(  2\langle \nabla \delta_{A_n}(z_{n+1}) , (z^*-z_{n+1}) \rangle + (z^* - z_n)\tran ( R_{n-1} - R_{n}) (z^* - z_n) \right)~,$$
$$\beta = \sum_{n=0}^{k-1} (n+1)\left((z_{n+1}-z_n)\tran R_n (z_{n+1}-z_n) + 2\delta_{A_n}(z_{n+1}) - 2\delta_{A_n}(z_n) \right)~.$$
\end{proof}

\restatethm{\oneStep}{corr:one_step}
\begin{proof}
	 We verify that in that case, $R_{n-1} - R_n \equiv 0$ and for $n>1$ and $\delta_{A_n} \equiv 0$ for $n > 0~.$
\end{proof}

\crossref{}
\appendix

\section{Existence of a gap for generic dictionary.}

%
%
%

\subsection{Properties of $\mathcal E_\delta$ }
\label{sub:prop_A}

\begin{propositionA}
\label{prop:quasi_unitary}
If a matrix $A$ has its columns in $\mathcal
E_{\delta,i}$, then it is almost unitary for small
value of $\delta$. More precisely, denoting
$\nu = A\tran A-\I_K$, when $\delta \to 0$ 
\[
	\|\nu\|_F = \bO\delta
\]
\end{propositionA}

\begin{proof}
Let $\nu = A\tran A - \I_K$. As $A_i$ are in $\mathcal E_{\delta, i}$, 
\[ \nu_{i,i} = A_i\tran A_i  - 1 = 0 \] 
We can verify that for $i \neq j$
\begin{align*}
\nu_{i, j} = A_i\tran A_j
	& = \delta\sqrt{1-\delta^2} (e_i\tran h_j
		+ e_j\tran h_i)
		+ \delta^2 h_i\tran h_j\\
	& = \delta(e_i\tran h_j + e_j\tran h_i)
		+ \bO{\delta^2}
\end{align*}
This permits to bound the Frobenius norm of $\nu$ \ie
\[
	\|\nu\|_F^2 = \sum_{1 \le i, j \le K}
	\nu_{i,j}^2 = \delta^2 \sum_{\substack{1\le i,j \le K\\
	i\neq j}} (e_i\tran h_j + e_j\tran h_i)^2 + \bO{\delta^3}~.
\]

\begin{align*}
	\|\nu\|_F^2 = \sum_{1 \le i, j \le K}
	\nu_{i,j}^2 & = \delta^2 \sum_{\substack{1\le i,j \le K\\
	i\neq j}} (e_i\tran h_j + e_j\tran h_i)^2 + \bO{\delta^3}~,\\
	& = \delta^2 \sum_{1 \le i, j \le K} (h_{i,j} + h_{j,i})^2 + \bO{\delta^3}~,\\
	& = \delta^2 \|H + H\tran\|_F^2 + \bO{\delta^3}~,\\
	& = 4\delta^2 \|H\|_F^2 + \bO{\delta^3} = 4\delta^2 K + \bO{\delta^3}~.
	\tag*{as $\|h_i\|_2^2 = 1$ }
\end{align*}
\end{proof}


\begin{propositionA}
\label{prop:rotation}
For $A \subset \mathcal E_\delta$, and for any symmetric matrix
$U \in \R^{K\times K}$, when $\delta \to 0$,
\[
	\left\|A^{-1}UA\right\|_F^2  \le \|U\|_F^2
	+ \bO{\delta^3}\\
\]
\end{propositionA}

\begin{proof}
For $U \in \R^{K\times K}$ symmetric, as $A$ is quasi unitary, 
\begin{align*}
 \left\|A^{-1}UA\right\|_F^2 
 	& = \Tr{A\tran U (A^{-1})\tran A^{-1}UA}
 	  = \Tr{U (A\tran A)^{-1}UAA\tran}\\
 	& = \Tr{U (\I_K + \nu)^{-1}U(\I_K + \nu)}
 	  = \Tr{U (\I_K - \nu + \nu^2 + \bO{\nu^3})U
 	  	(\I_K + \nu)}\\
 	& = \Tr{U U + U\nu^2U - U\nu U\nu + \bO{\nu^3})}\\
	& = \|U\|_F^2 + \|\nu U\|_F^2 -
		\|\nu^{\frac{1}{2}}U \nu^{\frac{1}{2}} \|_F^2
		+ \bO{\|\nu^{3/2}\|_F^2}
\end{align*}

Notice that
\[
	\|\nu^{\frac{1}{2}}U \nu^{\frac{1}{2}} \|_F
	= \|(U\nu)^{\frac{1}{2}}{}\tran (U\nu)^{\frac{1}{2}}\|_F
	= \|(U\nu)^{\frac{1}{2}}\|_F^2
	\ge \|U\nu\|_F
\]
Thus $\|\nu U\|_F^2 - \|\nu^{\frac{1}{2}}U \nu^{\frac{1}{2}} \|_F^2 \le 0$
and by submultiplicativity of $\|\cdot\|_F^2$,
\[
	\|\nu^{3/2}\|_F^2 \le \|\nu\|_F^3 = \bO{\delta^3}
		~~~\Rightarrow~~~ \bO{\|\nu^{3/2}\|_F^2} = \bO{\delta^3}~.
\]
By combining all these results, we get:
\[
	\left\|A^{-1}UA\right\|_F^2 \le  \|U\|_F^2 +\bO{\delta^3}\\
\]
\end{proof}

\begin{propositionA}
\label{lem:A-1_AT}
For a matrix $A \subset \mathcal E_{\delta}$ and any
matrices $X, Y \in \R^{K\times K}$, when
$\delta \to 0~,$
\[
\left\|A^{-1}XA - Y\right\|_F^2
	 \le \left\|X - AYA\tran\right\|_F^2
		+ \left\|Y\right\|_F^2\left\|\nu\right\|_F^2
		 + \bO{\delta^3}~.
\]
\end{propositionA}

\begin{proof}
First, we split the error of replacing $\left\|
A^{-1}XA - Y\right\|_F^2$ by $\left\|X -
AYA\tran\right\|_F^2$ in two terms. Both are linked
to the quasi unitarity of $A$. The first term arises
as we replace $A^{-1}$ by $A\tran$,
\begin{align*}
\left\|A^{-1}XA - Y\right\|_F^2
	& = \left\|A^{-1}\left(X - AYA^{-1}\right)A\right\|_F^2
	 = \left\|A^{-1}\left(X - AY\underbrace{(A\tran A - \nu)}_{\I_K}
	 	A^{-1}\right)A\right\|_F^2\\
	& = \left\|A^{-1}\left(X - AYA\tran + AY\nu A^{-1}\right)A\right\|_F^2\\
	& \le 2\left\|A^{-1}\left(X - AYA\tran\right)A\right\|_F^2
		+ 2\left\|A^{-1}\left(AY\nu A^{-1}\right)A\right\|_F^2
		\tag*{\scriptsize(triangular inequality)}\\
	& \le 2\left\|X - AYA\tran\right\|_F^2
		+ 2\left\|Y\nu\right\|_F^2 + \bO{\delta^3}
		\tag*{\scriptsize(\autoref{prop:rotation})}\\
	& \le 2\left\|X - AYA\tran\right\|_F^2
		+ 2\left\|Y\right\|_F^2\left\|\nu\right\|_F^2
		+ \bO{\delta^3}
		\tag*{\scriptsize(submultiplicativity $\|\cdot\|_F$ )}\\
	& \le 2\left\|X - AYA\tran\right\|_F^2
		+ 8\delta^2K\left\|Y\right\|_F^2
		+ \bO{\delta^3}
		\tag*{\scriptsize(\autoref{prop:quasi_unitary})}
\end{align*}
\end{proof}

\begin{propositionA}
\label{prop:unitary}
For $A \subset \mathcal E_\delta$, and for any matrix
$U \in \R^{K\times K}$, when $\delta \to 0$,
\[
\left\|A\tran UA\right\|_F^2
	= \|U\|_F^2\left(1 + \|\nu\|_F^2\right)
	  +\bO{\delta^3}
\]
\end{propositionA}
\begin{proof}
	
\begin{align*}
\|AUA\tran\|_F = \|AXA\tran AA^{-1}\|_F^2
	& = \|AU(\I_K+\nu)A^{-1}\|_F^2\\
    & = \|U+U\nu\|_F^2 + \bO{\delta^3}
    \tag*{\scriptsize(\autoref{prop:rotation})}\\
    & = 2\|U\|_F^2 + 2\|U\nu\|_F^2 + \bO{\delta^3}
    \tag*{\scriptsize(triangular inequality)}\\
    & = 2\|U\|_F^2 + 2\|U\|_F^2\|\nu\|_F^2
    	+\bO{\delta^3}
    \tag*{\scriptsize($\|\cdot\|_F$ is
    	  sub multiplicative)}\\
	& \le 2\|U\|_F^2 + 8\delta^2K\|U\|_F^2
		+ \bO{\delta^3}
		\tag*{\scriptsize(\autoref{prop:quasi_unitary})}
\end{align*}
\end{proof}

%
%
%

\subsection{Control the deviation of $\|\cdot\|_B$ }
\label{sub:control_Rf} 

\restatethm{\controlR}{lem:control_Rf}

\begin{proof}
\label{proof:control_Rf}
First, we use the results from \autoref{lem:A-1_AT}
to remove the inverse matrix $A^{-1}$
\[
\left\|A^{-1}SA - B\right\|_F^2 
\le \left\|S - ABA\tran\right\|_F^2
	+\left\|B\right\|_F^2\left\|\nu\right\|_F^2
	+ \bO{\delta^3}~.
\]
Using \autoref{prop:quasi_unitary} with
$A \subset \mathcal E_\delta$,
\[
\|\nu\|_F^2 = 4\delta^2K + \bO{\delta^3}
\]
and
\[
\left\|A^{-1}SA - B\right\|_F^2 
    \le \left\|S - ABA\tran\right\|_F^2
    	+4\delta^2K\|B\|_F^2 + \bO{\delta^3}~.
\] 
Then, we only need to control $\left\|S -
ABA\tran\right\|_F^2$.
First we note that this can be split into 2 terms
\begin{align*}
\left\|S - ABA\tran\right\|_F^2 
&= \sum_{i=1}^K\sum_{\substack{j=1\\j \neq i}}^K
	(A_i\tran B A_j)^2
 = \sum_{i=1}^K\sum_{j=1}^K (A_i\tran B A_j)^2
	- \sum_{i=1}^K (A_i\tran B A_i)^2\\
&= \left\|ABA\tran\right\|_F^2
	- \sum_{i=1}^K \|DA_i\|_2^4\\
& = \left\|B\right\|_F^2(1+4\delta^2K)
	- \sum_{i=1}^K \|DA_i\|_2^4 + \bO{\delta^3}
	\tag*{\scriptsize(\autoref{prop:unitary})}
\end{align*}
	
The first term is the squared Frobenius norm of
a Wishart matrix and can be controlled by
\begin{align*}
\E[D]{\|B\|_F^2}
&= \E[D]{\sum_{i=1}^K\sum_{j=1}^K B_{i,j}^2}
 = \sum_{i=1}^K\sum_{j=1}^K
   \E[D]{\left(\sum_{l=1}^p
 		 d_{i,k}d_{j,k}\right)^2}\\
&= \sum_{i=1}^K\sum_{\substack{j=1\\i\neq j}}^K
   \E[D]{\left(\sum_{l=1}^p
		 d_{i,l}d_{j,l}\right)^2}
 + \sum_{i=1}^K \E[D]{\|d_i\|_2^4}\\
&=  \sum_{i=1}^K\sum_{\substack{j=1\\i\neq j}}^K
	\left[\sum_{l=1}^p\E[D]{
		d_{j,l}^2d_{i,l}^2} 
	+ \sum_{l=1}^p\sum_{\substack{m=1\\m\neq l}}^p
\underbrace{\E[D]{d_{i,l}d_{i,m}d_{j,l}d_{j,m}}}
			_{=0}\right] + K\\
&= \sum_{i=1}^K\sum_{\substack{j=1\\i\neq j}}^K 
   \sum_{l=1}^p\E[d_j]{d_{j,l}^2}
   \E[d_i]{d_{i,l}^2} + K
   \tag*{($d_i$ are independent)}\\
&=  \sum_{i=1}^K\sum_{\substack{j=1\\i\neq j}}^K
	\sum_{l=1}^p \frac{1}{p^2} + K
 = \frac{K(K-1)}{p} + K~.
 \tag*{\scriptsize($\E[d]{d_{i,j}^2} = \frac{1}{p}$, \citep{Song1997})}
\end{align*}

For the second term, consider $u \in \mathcal
E_{\delta,i}~,$ such that $u = \sqrt{1-\mu^2}e_i +
\mu h$ for $0<\mu<\delta$, $h\in Span(e_i)^\perp$.
Given $i \in [K]$, $Be_i$ can be decomposed as $z_1 e_i + z_2 h_i$,
	with $h_i \in Span(e_i)^\perp \cap \mathcal S^{K-1}$. Using basic algebra,
	$z_1$ and $z_2$ are:
	\begin{align}
		z_1 & =  e_i\tran Be_i = \|De_i\|_2^2 = \|d_i\|_2^2= 1~.\\[.5em]
		z_2^2 & =  \|Be_i\|_2^2 - z_1^2 = \|D\tran d_i\|_2^2 - 1
		\label{eq:Bei_cross}
	\end{align}
	Also, for all $i, j \in [K]$, if $i\neq j$ then
	$h_i\tran e_j = \begin{cases}0 &\text{ if } \|D\tran d_i\|_2^2 = 1\\
								\frac{d_i\tran d_j}{\|D\tran d_i\|_2^2 - 1} & \text{elsewhere}\\
								 \end{cases}$ 
Then  
\begin{align*}
\|D u\|_2^2 & = u\tran D\tran D u = u\tran B u
	\nonumber\\
& = (1-\mu^2)\underbrace{\|D e_i\|_2^2}
	_{\|d_i\|_2 = 1} + \mu^2\|D h\|_2^2
  + 2\mu\sqrt{1-\mu^2}\underbrace{h\tran B e_i}
    _{z_2h\tran h_i}\label{eq:proj_B}
\end{align*}
Thus, with the notation from \autoref{eq:Bei_cross},
$h\tran Be_i = z_2 h\tran h_i$ and
\begin{equation}
	\|Du\|_2^2
		= (1-\delta^2) + \delta^2\|Dh\|_2^2
		+ 2\delta\sqrt{1-\delta^2}
		   \sqrt{\|D\tran d_i\|_2^2-1}h\tran h_i
\end{equation}
Now we can use this to derive a lower bound on
$\max_{u\in \mathcal E_{\delta,i}}\|D\tran u\|_2^2$ 
when $\delta \to 0~,$
\begin{align*}
	\max_{u\in \mathcal E_{\delta, i}}\|D u\|_2^2
	& \ge 1 + 2\delta\sqrt{\|D\tran d_i\|_2^2-1} 
		  +\delta^2\left(\|D\tran d_i\|_2^2-1\right)
		  +\bO{\delta^3}\\
		\tag*{\it$\smeq(u = \hat A_i)$}
\end{align*}
Taking the square of this relation yields
\[
\max_{u\in \mathcal E_{\delta, i}}\|D u\|_2^4
	 \ge 1 + 4\delta\sqrt{\|D\tran d_i\|_2^2-1} 
		  +6\delta^2\left(\|D\tran d_i\|_2^2-1\right)
		  +\bO{\delta^3}\\
\]
Taking the expectation yields
\begin{align*}
\E[D]{\max_{u\in\mathcal E_{\delta, i}}\|D u\|_2^4} 
	& \ge 1 + 4\delta\E[D]{
			\sqrt{\|D\tran d_i\|_2^2-1}}
		+ 6\delta^2\E[D]{\|D\tran d_i\|_2^2-1}
		+\bO{\delta^3}~.
\end{align*}
	
The random variable $pY_i^2 = p(\|D\tran d_i\|_2^2-1)$
are distributed as $\chi_{K-1}^2$. Indeed, the atoms
$d_i$ are uniformly distributed over $\mathcal S^{K-1}
$. As this distribution is rotational invariant,
without loss of generality, we can take $d_i = e_1$.
Then $Y_i$ is simply sum of $K-1$ squared normal
gaussians {\it rv} with variance $\frac{1}{p}$, 
\begin{align*}
	Y_i^2 = \|D\tran e_1\|_2^2-1
	&= \sum_{j = 2}^K d_{j,1}^2~,
\end{align*}
and $\sqrt{p} Y_i$ is distributed as $\chi_{K-1}$.
A lower bound for its expectation is
\[
	\E[D]{Y_i} = \sqrt{\frac{2}{p}}
		\frac{\Gamma\left(\frac{K}{2}\right)}
			 {\Gamma\left(\frac{K-1}{2}\right)}
		\ge \frac{K-1}{\sqrt{pK}}
	~~~~\text{and}~~~~\E[D]{Y_i^2} = \frac{K-1}{p}
\]
We derive a lower bound for the second term when
$\delta \to 0~$,
\begin{align*}
\E[D]{\max_{u\in\mathcal E_{\delta, i}}
	\|D\tran u\|_2^4} 
& \gtrsim 1 + 4\delta\frac{K-1}{\sqrt{pK}} +
		6\delta^2\frac{K-1}{p} + \bO{\delta^3}
\end{align*}
Using these results, we derive an upper bound for the
expected distortion	of $B$ with $A$ with columns in
$\mathcal E_{\delta,i}$,
\begin{align*}
	\E[D]{\min_{A_i \in \mathcal E_{\delta, i}}
		\left\|S - ABA\tran\right\|_F^2}
	& \le \E[D]{\left\|B\right\|_F^2} - \sum_{i=1}^K
		\E[D]{ \max_{A_i \in \mathcal E_\delta}
			\|DA_i\|_2^4} + C_1\delta^2 + \bO{\delta^3}\\
	& \le K + \frac{K(K-1)}{p} - \sum_{i=1}^K
		1 +  4\delta \frac{K-1}{\sqrt{pK}}
		+ C_2\delta^2 + \bO{\delta^3}\\
	 & \le \frac{K(K-1)}{p}
	 	- 4\delta (K-1)\sqrt{\frac{K}{p}}
	 	+ C'\delta^2 + \bO{\delta^3}\\
\end{align*}
And 
\[
C' = \delta^2\left(4K \E[D]{\|B\|_F^2} - 6\frac{K(K-1)}{p}\right)
\]
This concludes our proof as 
\begin{align*}
	\E[D]{\min_{A_i \in \mathcal E_{\delta, i}}
		\left\|A^{-1}SA - B\right\|_F^2}
		\tag*{\scriptsize(\autoref{lem:A-1_AT})} 
	& = \E[D]{\min_{A_i \in \mathcal E_{\delta, i}}
		\left\|S - ABA\tran\right\|_F^2}+4\delta^2K \E[D]{\|B\|_F^2}\\
		&~~~~~~~~~~~~~~~~~~~~~~~~~~~~~~~~~~~~~~~~~~~~~~
			+ \bO{\delta^3}\\
	& \le \frac{K(K-1)}{p}
	 	- 4\delta (K-1)\sqrt{\frac{K}{p}}
	 	+ C\delta^2 + \bO{\delta^3}~.
	 	\tag*{\scriptsize(\autoref{lem:control_Rf})}
\end{align*}
And 
\[
	C = 8K \E[D]{\|B\|_F^2} - 6\frac{K(K-1)}{p}
\]
	
\end{proof}

%
%
%
\subsection{Controling $\E[z \sZ]{\delta_A(z)}$ }
\label{sub:control_da}

\restatethm{\controlDa}{lem:da}

\begin{proof}
\label{proof:da}
For any random variable $z = (z_1,\dots, z_K) \sim \mathcal Z \in \R^K$ \st{} the $z_i$  are rotational invariant,
then 
\[
	1 = \E[z\sZ]{1} 
	  = \E[z\sZ]{\frac{\|z\|_1}{\|z\|_1}
	  			 \middle|\|z\|_1} 
	  = \sum_{i=1}^K \E[z\sZ]{\frac{|z_i|}{\|z\|_1}
	  						  \middle| \|z\|_1} 
		= K \E[z\sZ]{\frac{|z_1|}{\|z\|_1}
					 \middle| \|z\|_1}
\]
Thus we get:
\begin{equation}
	\label{eq:Ez}
	\E[z\sZ]{\frac{|z_1|}{\|z\|_1}\middle| \|z\|_1}
			= \frac{1}{K}~.
\end{equation}

Let $z = \left( z_1, \dots z_K\right)$ be a vector of
$\R^K$. Then
\[
	\|Az\|_1 = \sum_{i=1}^K
		\left|\sum_{j=1}^K A_{i,j}z_j\right| 
	\le \sum_{i=1}^K \sum_{j=1}^K
		\left| A_{i,j}\right|\left|z_j\right| 
	\le \sum_{j=1}^K
		\left\| A_i\right\|_1\left|z_j\right|
\]

Using \mbox{\autoref{eq:Ez}}, we can compute an upper
bound for $\E[z\sZ]{\frac{\left|Az\right|_1}{\|z\|_1}}$:
\[
	\E[z\sZ]{\frac{\|Az\|1}{\|z\|_1} \middle| \|z\|_1} 
	\le \sum_{i=1}^K \|A_i\|_1
		\E[z\sZ]{\frac{|z_i|}{\|z\|_1} \middle| \|z\|_1} 
	= \frac{\|A\|_{1,1}}{K}
\]
Finally, we can get an upper bound on
$\E[z\sZ]{\|Az\|_1}$: 
\begin{align*}
	\E[z\sZ]{\|Az\|_1}
	& = \E[z\sZ]{ \E[z\sZ]{\|A \frac{z}{\|z\|_1}\|_1
		\middle| \|z\|_1} \|z\|_1}\\
	& \le \E[z\sZ]{ \frac{\|A\|_{1,1}}{K} \|z\|_1}\\
	& \le \frac{\|A\|_{1,1}}{K}\E[z\sZ]{\|z\|_1}
\end{align*}

This permits to control $\E[z\sZ]{\delta_A(z)}$ with
\[
	\E[z\sZ]{\delta_A(z)}
	= \E[z\sZ]{\|Az\|_1 - \|z\|_1}
	\le \frac{\|A\|_{1,1} 
		- \|\I\|_{1,1}}{K} \E[z\sZ]{\|z\|_1}
\]

Then, for $A\in \mathcal E_\delta$, the $\ell_1$-norm of the
columns $A_i$ is 
\[
	\|A_i\|_1 \le  \sqrt{1-\delta^2}
		+ \delta \sqrt{K-1}\nonumber
\]
We can derive an expression of $\frac{\|A\|_{1,1}
- \|\I\|_{1,1}}{K}$ for $\delta \to 0$, 
\begin{eqnarray}
	\frac{\|A\|_{1,1}}{K} - 1
		= \frac{1}{K}\sum_{i=1}^K\|A_i\|_1 - 1 
	& \le & \sqrt{1-\delta^2} + \delta \sqrt{K-1} - 1
		\nonumber\\
	& \le & \delta \sqrt{K-1} - \frac{\delta^2}{2}+
		\underset{\delta\to 0}{\bO{\delta^4}}
	\label{eq:control_dA}
\end{eqnarray}
\end{proof}


%
%
%

\subsection{Evalutation of the gap}
\label{sub:certif}

\begin{propositionA}
	For $A$ with columns chosen greedily in
	$\mathcal E_{\delta,i}$, using results from
	\autoref{lem:control_Rf} and \autoref{lem:da},
	\begin{align*}
	\E[D]{\min_{A\subset \mathcal E_\delta}\left\|A^{-1}SA - B\right\|_F^2\|v\|_2^2
		+ \lambda \delta_A(z)}
	\le &\\ 
	\frac{(K-1)K}{p} \|v\|_2^2
		&+ \delta\sqrt{K-1} \left(\lambda\|z\|_1 -
				\sqrt{\frac{K(K-1)}{p}}\|v\|_2^2 \right)
		+\bO{\delta^2}
	\end{align*}
\end{propositionA}

\begin{proof}
We denote $v = z - z_k$.
For $A$ with columns chosen greedily in
$\mathcal E_{\delta,i}$, using results from
\autoref{lem:control_Rf} and \autoref{lem:da},
\begin{align}
\E[D]{\left\|A^{-1}SA - B\right\|_F^
	  2\|v\|_2^2 + \lambda \delta_A(z)}
\le & \|v\|_2^2 \left(\frac{K-1}{\sqrt{p}}
	\left(\frac{K}{\sqrt{p}} - 4\delta\sqrt{K}\right)
	 	+ \bO{\delta^2}\right)\nonumber\\
	&~~~~+ \lambda \|z\|_1\left(\delta \sqrt{K-1}
		+ \bO{\delta^2}\right)\nonumber\\
\le & \frac{(K-1)K}{p} \|v\|_2^2\nonumber\\
	&~~~~+ \delta\sqrt{K-1} \left(\lambda\|z\|_1 -
			\sqrt{\frac{K(K-1)}{p}}\|v\|_2^2 \right)
	+\bO{\delta^2} \label{eq:gap}
\end{align}
\end{proof}

\restatethm{\certifAcc}{thm:certif}

\begin{proof}
	\label{proof:certif}
	For $A \subset \mathcal E_\delta$ with columns chosen greedily in
	$\mathcal E_{\delta,i}$, using results from
	\autoref{lem:control_Rf} and \autoref{lem:da},
	\begin{equation}
	\begin{aligned}
	\mathcal E_D \left[\min_{A\subset \mathcal E_\delta}\left\|A^{-1}SA - B\right\|_F^2\right.&\|v\|_2^2
		+ \lambda \delta_A(z)\bigg]
	\le &\\ 
	\frac{(K-1)K}{p} \|v\|_2^2
		&+ \delta\sqrt{K-1} \left(\lambda\|z\|_1 -
				\sqrt{\frac{K(K-1)}{p}}\|v\|_2^2 \right)
		+\bO[\delta\to 0]{\delta^2}
	\end{aligned}	
\end{equation}
	Starting from \autoref{prop:cost_update}, and using the results from 
	 as
	\begin{align*}
		\E[D]{F(z_{k+1}) - F(z^*)} \leq&
		\frac{(K-1)K}{p} \|z_k-z^*\|_2^2\\
		&+ \delta\sqrt{K-1} \underbrace{\left(\lambda\left(\|z\|_1+\|z^*\|\right) -
				\sqrt{\frac{K(K-1)}{p}}\|z_k-z^*\|_2^2 \right)}_{\le 0}
		+\bO[\delta\to 0]{\delta^2}
	\end{align*}
\end{proof}

\end{document}